\newif\ifabstract
\abstracttrue
%%%%ARXIV
 \abstractfalse %Uncommenting this line gives the full version
%%%
\newif\iffull
\ifabstract \fullfalse \else \fulltrue \fi
%%%%%%%%%%%%%%%%%%%%%%

\documentclass[11pt]{article}
\usepackage{amsfonts}
\usepackage{amssymb}
\usepackage{amstext}
\usepackage{amsmath}
\usepackage{xspace}
\usepackage{theorem}
\usepackage{graphicx}
\usepackage{url}
\usepackage{graphics}
\usepackage{colordvi}
\usepackage{colordvi}

\textheight 9.3in \advance \topmargin by -1.0in \textwidth 6.7in
\advance \oddsidemargin by -0.8in
\newcommand{\myparskip}{3pt}
\parskip \myparskip

        {\hspace*{\fill}$\Box$\par\vspace{4mm}}

\newcommand{\be}{\begin{enumerate}}
\newcommand{\ee}{\end{enumerate}}
\newcommand{\bd}{\begin{description}}
\newcommand{\ed}{\end{description}}
\newcommand{\bi}{\begin{itemize}}
\newcommand{\ei}{\end{itemize}}

%--------------------------------------------------------------
%--------------------------------------------------------------
%Theorems and such
%--------------------------------------------------------------
%--------------------------------------------------------------
\newtheorem{theorem}{Theorem}[section]

\newtheorem{definition}{Definition}[section]

\newenvironment{proof}{\par \smallskip{\bf Proof:}}{\hfill\stopproof}
\def\stopproof{\square}
\def\square{\vbox{\hrule height.2pt\hbox{\vrule width.2pt height5pt \kern5pt
\vrule width.2pt} \hrule height.2pt}}

%--------------------------------------------------------------
%--------------------------------------------------------------
%Figures and such
%--------------------------------------------------------------
%--------------------------------------------------------------

%-----------------------------------------------------
%Programs
%-----------------------------------------------------
%\newenvironment{prog}[1]{
%\begin{minipage}{5.8 in}
%\begin{center}
%{\sc #1}
%\end{center}
%\begin{itemize}}
%{
%\end{itemize}
%\end{minipage}}

%-----------------------------------------------------------

%--------------------------------------------------------------
%--------------------------------------------------------------
%Other - Math
%--------------------------------------------------------------
%--------------------------------------------------------------
\renewcommand{\phi}{\varphi}

%first parameter is optional

%First parameter: index from which counting starts is optional, default is 1; second parameter is the %letter before the property indices
%Example: \begin{properties}{C} or \begin{properties}[5]{C} will start with C6

\setlength{\parskip}{2mm} \setlength{\parindent}{0mm}

\mathchardef\hyphen="2D

\usepackage{booktabs} % For formal tables
\usepackage{amsmath}
\usepackage[linesnumbered, ruled]{algorithm2e}
\usepackage{subcaption}
\captionsetup{compatibility=false}
\usepackage[font=small,labelfont=bf,tableposition=top]{caption}
\edef\oldtt{\ttdefault}
\usepackage[scaled]{beramono}
\usepackage{tabularx}
\usepackage[T1]{fontenc}
\renewcommand*\ttdefault{\oldtt}
\newcommand{\bera}[1]{{\small\fontfamily{fvm}\selectfont #1}}
\usepackage{enumitem}
\usepackage{cleveref}
\theoremstyle{definition}

\SetCommentSty{mycommfont}
\makeatletter
\newcommand\footnoteref[1]{\protected@xdef\@thefnmark{\ref{#1}}\@footnotemark}
\makeatother

%template for primal-dual
%\begin{tabular}[t]{|l|l|}\hline &\\
%$\begin{array}{lll}
%\text{\underline{Primal}}&&\\
%\text{Max}&\sum_i\sum_{p\in \pset_i}f(p)&\\
%\text{s.t.}&&\\
%&\sum_{p\in\pset_i}f(p)\leq 1&\forall 1\leq i\leq k\\
%\end{array}$
%&$
%\begin{array}{lll}
%\text{\underline{Dual}}&&\\
%\text{Min}&\sum_{i=1}^kz_i+\sum_{e\in E}x_e\\
%\text{s.t.}&&\\
%&z_i+\sum_{e\in p}x_e\geq 1&\forall 1\leq i\leq k, p\in\pset_i\\
%\end{array}$\\ &\\ \hline
%\end{tabular}

%------------------------------------------------------------
%Making subfigures
%------------------------------------------------------------

%\begin{figure}[h]
%\centering
%\subfigure[caption subfigure 1]{\scalebox{0.2}{\includegraphics{file1..pdf}}\label{fig: label1}}
%\hspace{1cm}
%\subfigure[caption subfigure 2]{
%\scalebox{0.2}{\includegraphics{file2.pdf}}\label{fig: label2}}
%\hspace{1cm}
%\subfigure[caption subfigure 3]{\scalebox{0.2}{\includegraphics{file3.pdf}}\label{fig: label3}}
%\caption{caption for the whole figure\label{fig: whole figure}}
%\end{figure}

%------------------------------------------------------------
%Regular figures
%------------------------------------------------------------

%\begin{figure}[h]
% \scalebox{0.3}{\includegraphics{file.pdf}}\caption{CAPTION \label{fig: label}}
%\end{figure}

\begin{document}

\title{Robust Continuous Co-Clustering\footnote{Submission under reviewing process on an international conference}}
\author{Xiao He\thanks{NEC Laboratories Europe. Email: {\tt xiao.he@neclab.eu}.}\and Luis Moreira-Matias\thanks{NEC Laboratories Europe. Email: {\tt luis.matias@neclab.eu}.}}

\maketitle

\thispagestyle{empty}

\begin{abstract}
Clustering consists on grouping together samples giving their similar properties. The problem of modeling simultaneously groups of samples and features is known as Co-Clustering. This paper introduces \texttt{ROCCO} - a Robust Continuous Co-Clustering algorithm. \texttt{ROCCO} is a scalable, hyperparameter-free, easy and ready to use algorithm to address Co-Clustering problems in practice over massive cross-domain datasets. It operates by learning a graph-based two-sided representation of the input matrix. The underlying proposed optimization problem is non-convex, which assures a flexible pool of solutions. Moreover, we prove that it can be solved with a near linear time complexity on the input size. An exhaustive large-scale experimental testbed conducted with both synthetic and real-world datasets demonstrates \texttt{ROCCO}'s properties in practice: (i) State-of-the-art performance in cross-domain real-world problems including Biomedicine and Text Mining; (ii) very low sensitivity to hyperparameter settings; (iii) robustness to noise and (iv) a linear empirical scalability in practice. These results highlight \texttt{ROCCO} as a powerful general-purpose co-clustering algorithm for cross-domain practitioners, regardless of their technical background.
\end{abstract}

\section{Introduction}
%Introduction to Clustering, Co-Clustering and its applications. Describe the limitations of the existing approaches.
Clustering consists on grouping together objects (\textit{samples}) giving their similar properties (\textit{features}). However, in practical high-dimensional data mining problems, it is often observed that such groups actually share only a subset of such properties - representing a \textit{context} (e.g. peak hour definition in Transportation \cite{khiari2016}). The problem of modeling simultaneously groups of \textit{samples} and \textit{features} is known as bi-clustering or \textbf{Co-Clustering (CoC)} \cite{chic2017,busygin2008}. It outputs a natural interpretability of their results through an explicit definition of a feature subspace (not necessarily exclusive) for each resulting cluster. This characteristic turns it interesting to apply in practice. Real-world examples range different domains, from Biomedicine \cite{chic2017} to Text mining \cite{dhillon2001co} or Marketing/Retail \cite{hofmann1999}.

%Limitations of SoA
Similarly to one-sided clustering, existing solutions to the CoC problem typically suffer from (at least) one of the following issues:
\begin{itemize}[wide]
	\item \textbf{Scalability} concerns the algorithm's ability to address large-scale datasets while keeping a reasonable ratio runtime vs. input size. \texttt{SAMBA} \cite{tanay2002} is a notorious example of this issue by scaling its runtime exponentially on the feature space.
	\item \textbf{Model or Hyperparameter Tuning}: Since CoC is an unsupervised learning process, the choice of adequate hyperparameter values often requires human expertise and/or time-consuming processes. Convex Bi-ClusteRing Algorithm (\texttt{COBRA} \cite{chic2017}) is one of the most recently proposed CoC methods and, like many of its predecessors, it is highly sensitive to its hyperparameter setting.
	\item \textbf{Stability}: Many algorithms are initialization-dependent, requiring the usage of heuristics that often incur on re-training models multiple times (e.g. $K$-Means ). Many CoC algorithms (e.g. Spectral Co-Clustering (\texttt{SCC}) \cite{dhillon2001co}) have $K$-Means (i.e. \texttt{KM}) as a building block, inheriting this disadvantage.
	\item \textbf{Noise (In)tolerance}: Real-world datasets often contain highly noisy records, turning many clustering algorithms not directly usable in practice - especially when facing high \textit{noise vs. patterns} ratios \cite{maurus2016}. OP-Cluster \cite{liu2003} is an approach for CoC that evolves from a sequential pattern mining approach, requiring clusters to have a considerable \textit{support} to be found.
\end{itemize}

%Describe our approach to the problem and relate it with SoA.
This paper introduces \texttt{ROCCO} - a Robust Continuous Co-Clustering algorithm. \texttt{ROCCO} is a scalable, robust, easy and ready to use algorithm to address CoC problems in practice over massive - and possible noisy - cross-domain datasets. It operates by learning a graph-based two-sided representation of the input matrix with a good CoC structure. The underlying proposed optimization problem is non-convex - which assures a flexible pool of solutions. Moreover, we prove that it can be solved with an empirical linear time complexity on the input size. \newline

%Provide an Example in a Figure of our approach
Fig. \ref{fig:intro_heatmap} illustrates a running example of \texttt{ROCCO} vs. a State-of-the-Art (SoA) benchmarking approach, \texttt{COBRA}. Like \texttt{ROCCO}, \texttt{COBRA} learns a data representation to address CoC problem on high-dimensional datasets - particularly focused on Biomedicine applications. However, it formulates the underlying optimization problem as a \textit{convex} one. While this assumption may hold in some domains, in others it may require a large effort on data preparation to achieve acceptable results. Moreover, in opposition to \texttt{ROCCO}, \texttt{COBRA} is also sensitive to its hyperparameter setting - regarding the number of clusters to be found. The progressive example in Fig. \ref{fig:intro_heatmap} clearly demonstrates that, under a realistic scenario of addressing an input data matrix with a CoC pattern surrounded by the significant amount of noise, \texttt{ROCCO} 's non-convex approach is more suitable and, therefore, a more generalist solution for this problem. Note that, for this benchmark, the \texttt{COBRA} hyperparameters were tuned using the heuristic originally proposed by the authors in \cite{chic2017}. \newline

%Highlight contributions
The contributions of this paper are three-fold:
\begin{enumerate}
	\item \textbf{Problem Formulation:} A generic problem formulation that includes a clear, continuous and non-convex objective function to perform Co-Clustering;
	\item \textbf{Scalability:} A pragmatic solving procedure for learning proved to have theoretical near-linear time complexity on the size of the input matrix. Moreover, we empirically demonstrate that its runtime scales linearly in practice;
	\item \textbf{Robustness:} All-in-all, a CoC algorithm that finds co-clusters in a nearly hyperparameter-free fashion, exhibiting low sensitivity to different settings when evaluated empirically. 
\end{enumerate}
Over the next sections of this paper, we formally introduce the proposed algorithm, the mathematical proofs that sustain its properties and a comprehensive experimental setup that benchmarks \texttt{ROCCO} against the SoA on this topic resorting to multiple large-scale synthetic datasets, as well as to cross-domain ones.
\begin{figure}[!t]
	\centering
	\begin{subfigure}[b]{0.367\linewidth}
		\centering
		\includegraphics[trim=1.1cm 1.28cm 2.18cm 0.0cm, width=\linewidth,clip=TRUE]{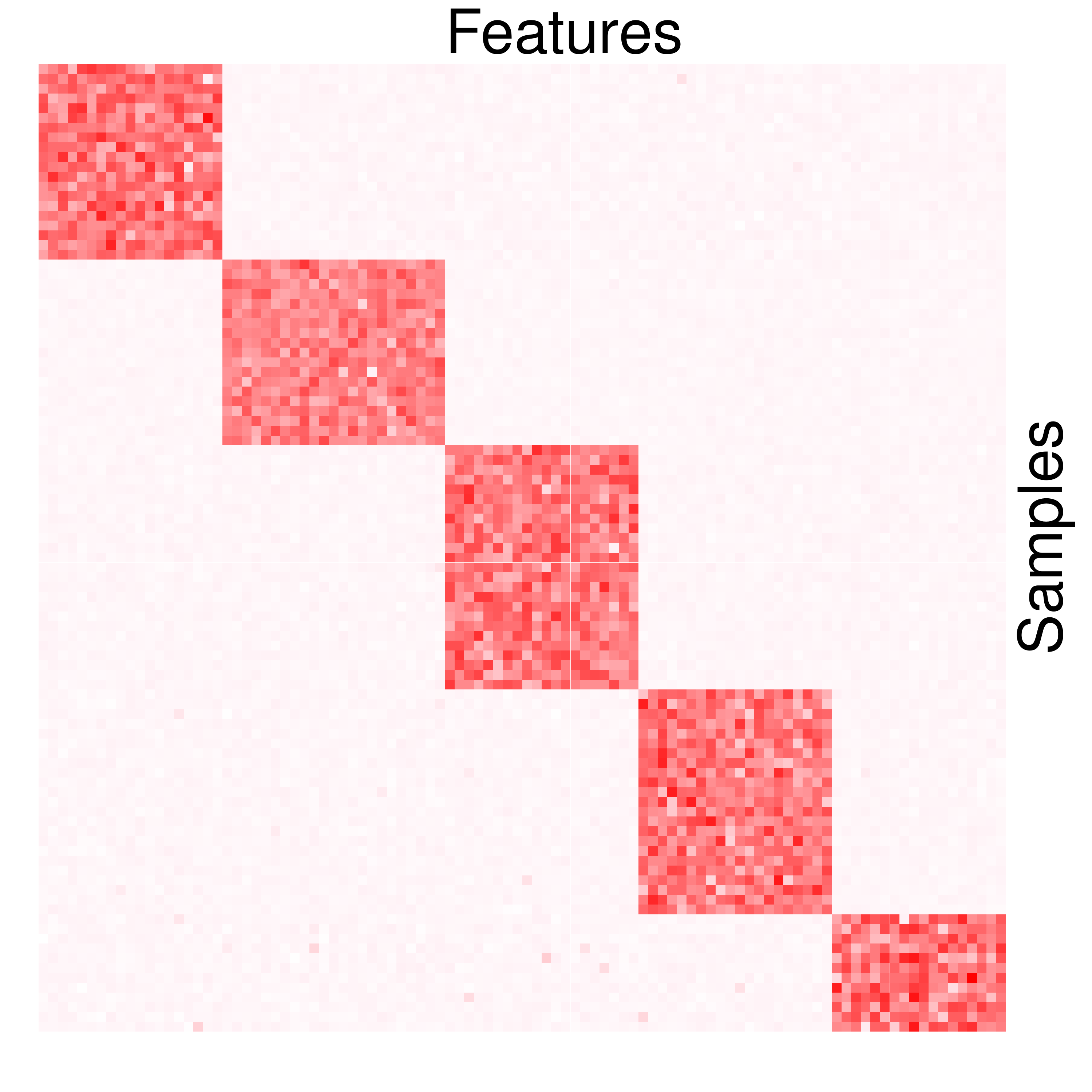}
		\label{fig1-1} 
		\vspace{-0.6cm}\caption{Initial \textit{Toy} Dataset.}
	\end{subfigure}\hspace{0.01cm}
	\begin{subfigure}[b]{0.395\linewidth}
		\centering
		\includegraphics[trim=1.1cm 1.28cm 0.16cm 0.0cm, width=\linewidth,clip=TRUE]{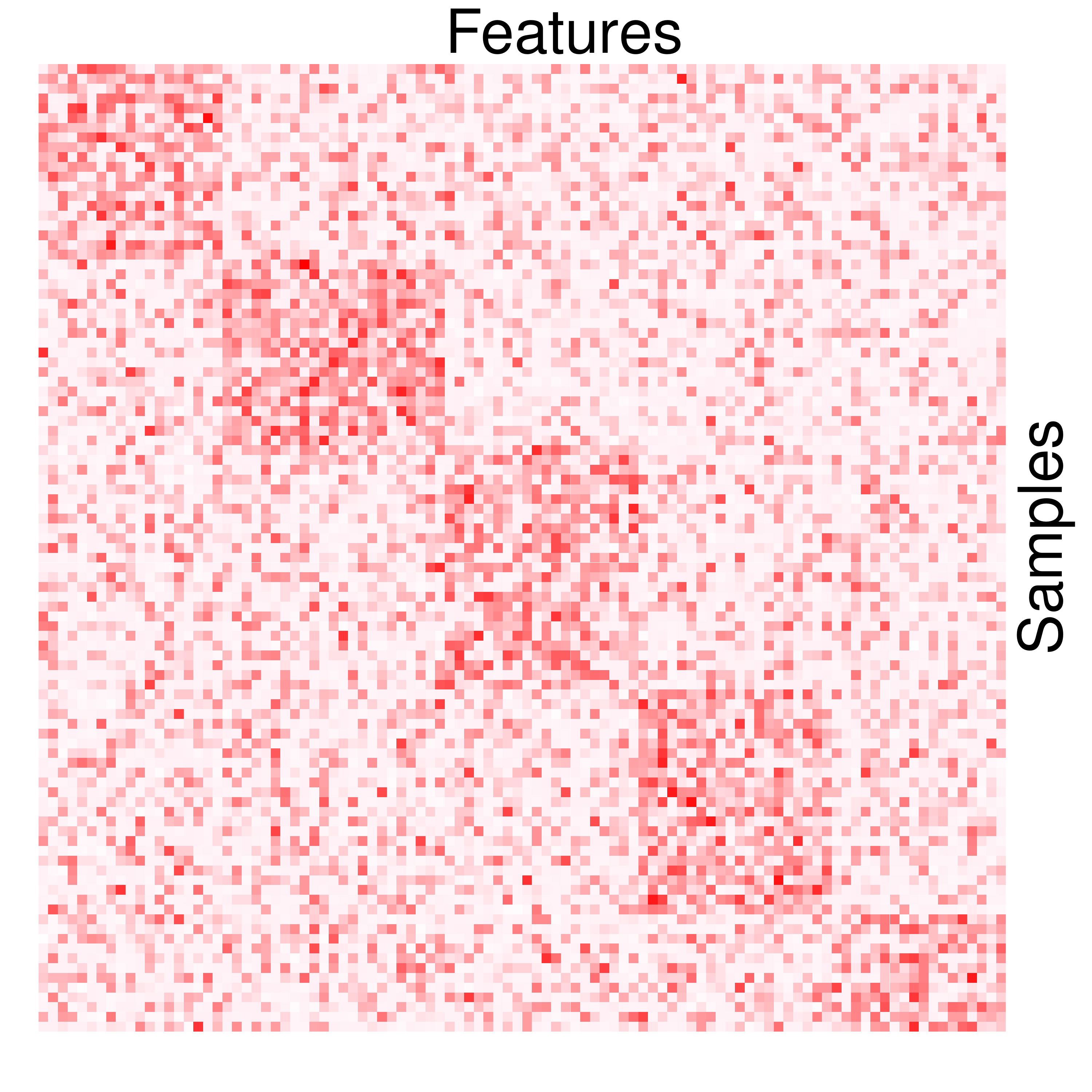}  
		\label{fig1-2} 
		\vspace{-0.6cm}\caption{\textit{Toy} Dataset with added noise.}
	\end{subfigure}\\ \vspace{-0.05cm} 
	\begin{subfigure}[b]{0.367\linewidth}
		\centering
		\includegraphics[trim=1.1cm 0.75cm 2.18cm 1.7cm, width=\linewidth,clip=TRUE]{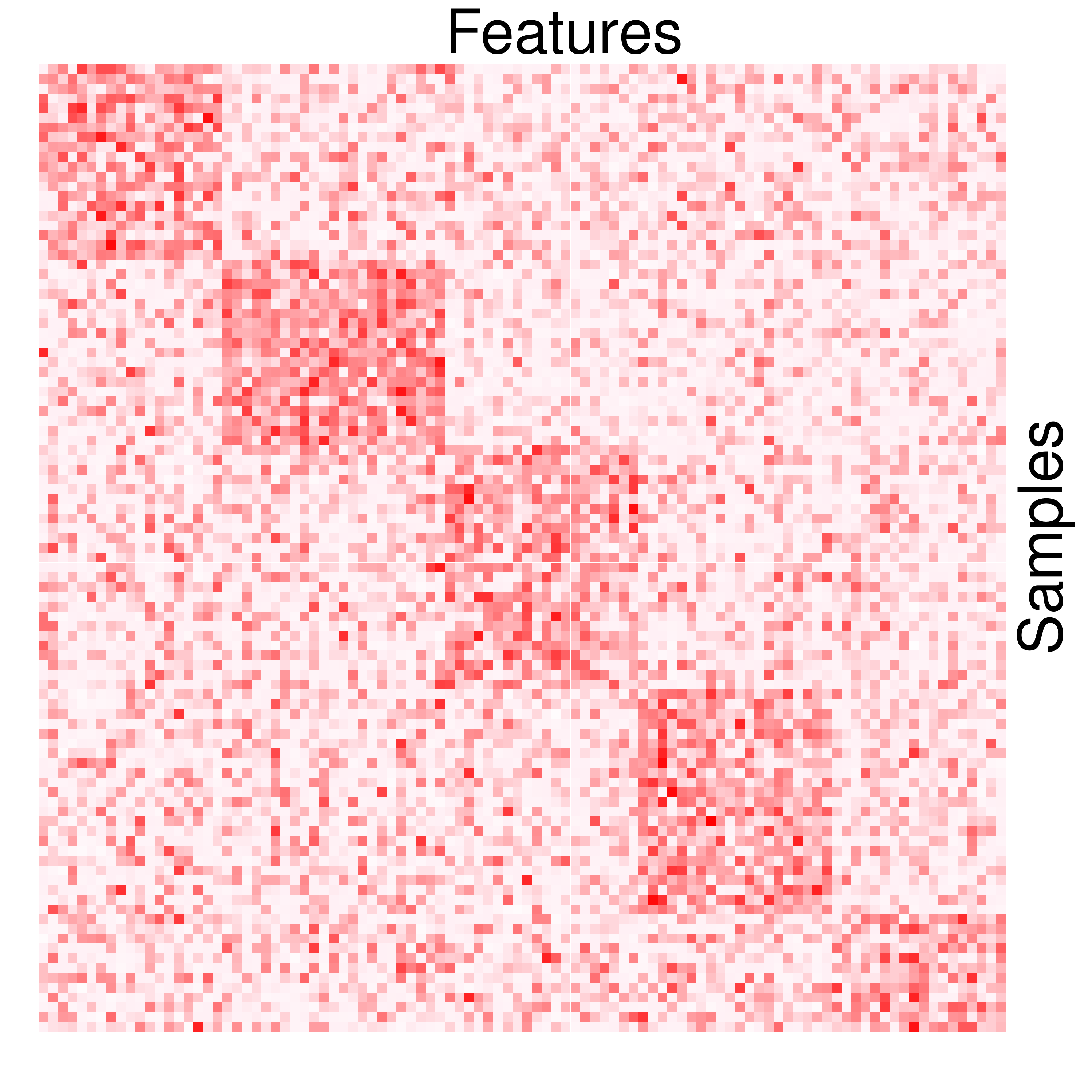}
		\label{fig1-3} 
		\vspace{-0.6cm}\caption{SoA \cite{chic2017} representation.}
	\end{subfigure}\hspace{0.01cm}
	\begin{subfigure}[b]{0.395\linewidth}
		\centering
		\includegraphics[trim=1.1cm 0.75cm 0.16cm 1.7cm, width=\linewidth,clip=TRUE]{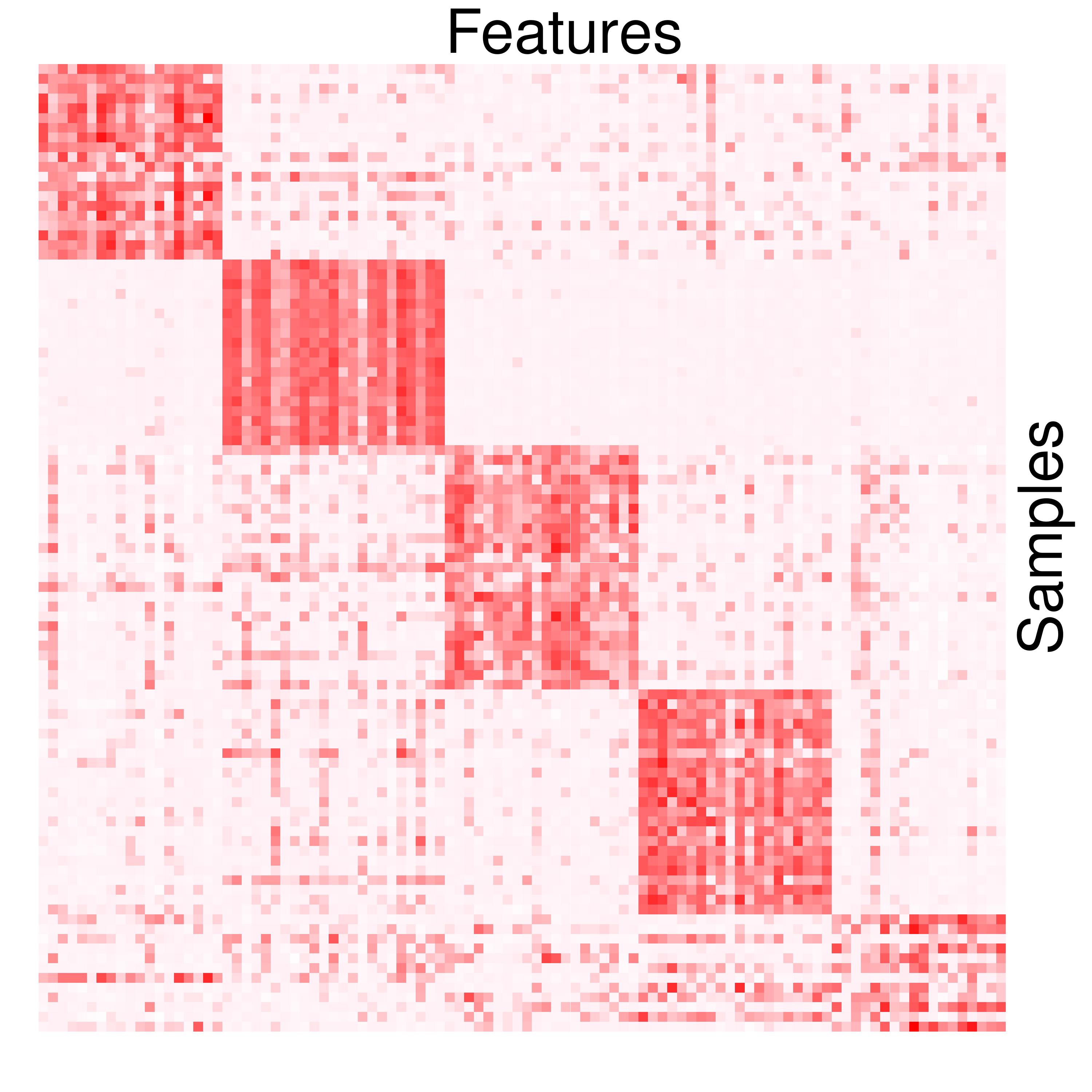}
		\label{fig1-4} 
		\vspace{-0.6cm}\caption{Our representation.}
	\end{subfigure}%%
%	\vspace{-0.3cm}
	\caption{ROCCO's benchmarking example.}
%	\vspace{-0.5cm}
	\label{fig:intro_heatmap}
\end{figure} 

\section{Robust Continuous Co-Clustering}
\subsection{Problem Formulation}
Consider a dataset with $n$ samples $X=\{x_1,x_2,...,x_n\} \in \mathbb{R}^{n\times p}$ with $p$ features each as input. \texttt{COBRA} \cite{chic2017} is a recent approach to CoC. It settles on a convex optimization problem that learns a graph-based representation of the latent co-clusters structure. It operates in three straightforward steps: (1) firstly, it constructs two graphs $G_p$ and $G_f$ on the samples and features of $X$, e.g. $k$-nearest neighbor on Euclidean space of $X$. Secondly (2), \texttt{COBRA} aims to learn a compact representation $U=\{u_1, u_2,...,u_n\}=\{u^1, u^2,...,u^p\} \in \mathbb{R}^{n\times p}$ of $X$ so that $u_i$ and $u_j$ are similar if samples $i$ and $j$ are connected in $G_p$, $u^i$ and $u^j$ are similar if features $i$ and $j$ are connected in $G_f$. 

This problem can be formulated as follows:

\begin{equation}
\min_U||X-U||_F^2+\frac{\alpha}{2}\sum\nolimits_{i,j\in E_p}||u_i-u_j||_2+\frac{\beta}{2}\sum\nolimits_{i,j\in E_f}||u^i-u^j||_2
\label{eq:cobra}
\end{equation}
where $\alpha$ and $\beta$ balance the relevance of the correspondent terms. Finally, the (3) clustering assignment is based on newly learned representation $U$, where the cluster structure is highlighted.

Despite being elegant, the convexity imposed by Eq. (\ref{eq:cobra}) may not adequately address the patterns in the input data matrix (e.g. see Fig. \ref{fig:intro_heatmap}). Also recently, Shah and Koltun \cite{Shah12092017} proposed a solution for classical clustering problems that overcomes this limitation: the Robust Continuous Clustering (\texttt{RCC}) algorithm. Its problem formulation settles on a non-convex optimization objective function: 	
\begin{equation}
\min_U\frac{1}{2}\sum\nolimits_{i=1}^n ||x_i-u_i||_2^2 + \frac{\alpha}{2}\sum\nolimits_{i,j\in E_p}\rho(||u_i-u_j||_2)
\label{eq:rcc}
\end{equation}
where the non-convexity comes from the regularization function $\rho$. In \cite{Shah12092017}, the authors propose the Geman-McClure function as $\rho$:
\begin{equation}
\rho(y)=\frac{\mu y^2}{\mu+y^2}
\label{eq:rho}
\end{equation}
where $\mu$ controls the function's convexity.

Although the optimization problem is non-convex, \texttt{RCC} achieves SoA clustering accuracy. Moreover, it automatically learns the regularization parameter $\alpha$ and non-convexity parameter $\mu$. The idea of using $L_2$ norm in regularization is to enforce the learned representatives $u_i$ from the same cluster to collapse into a single point. \texttt{RCC} authors \cite{Shah12092017} noted that convex functions have limited robustness to spurious edges in the graph. Consequently, their influence does not diminish during optimization. Therefore, the proposed non-convex function naturally overcomes this issue.

Departing from this idea, we firstly formulate \texttt{ROCCO}'s problem by naturally adapting the optimization objective function of \texttt{RCC} from clustering to the CoC problem. It goes as follows:
\begin{equation}
\min_U||X-U||_F^2+\frac{\alpha}{2}\sum_{i,j\in E_p}\rho(||u_i-u_j||_2)+\frac{\beta}{2}\sum_{i,j\in E_f}\rho(||u^i-u^j||_2)
\label{eq:rocco}
\end{equation}
Moreover, we also choose to inherit the straightforward non-convex function defined in (Eq. (\ref{eq:rho})). 

Departing from the ideas introduced by both \texttt{COBRA} and \texttt{RCC}, \texttt{ROCCO}'s problem formulation introduced in Eq. (\ref{eq:rocco}) is natural and straightforward. However, the solution of such optimization problem in a scalable fashion is far from trivial (e.g. the optimization solver proposed in \texttt{COBRA} cannot address the non-convex problem enunciated in Eq. (\ref{eq:rocco})). In the next section, we propose a novel method to solve Eq. (\ref{eq:rocco}) efficiently.	
\subsection{Optimization}
\label{sec:opt}
The solver originally proposed for \texttt{RCC} is based on the duality between robust estimation and line processes. We follow the same idea to optimize Eq. (\ref{eq:rocco}). To do it so, we start by introducing two auxiliary variables: $L_p(i,j)$ for connections $(i,j) \in E_p$; and $L_f(i,j)$ for connections $(i,j) \in E_f$. Using those, we can re-write Eq. (\ref{eq:rocco}) as:
\begin{equation}
\begin{split}
\min_{U, L_p, L_f}||X-U||_F^2+\frac{\alpha}{2}\sum\nolimits_{i,j\in E_p}(L_p(i,j)||u_i-u_j||_2^2+\Psi(L_p(i,j)))\\
+\frac{\beta}{2}\sum\nolimits_{i,j\in E_f}(L_f(i,j)||u^i-u^j||_2^2+\Psi(L_f(i,j)))
\end{split}
\label{eq:roccodual}
\end{equation}
As shown in \cite{Shah12092017}, Eqs. (\ref{eq:rocco},\ref{eq:roccodual}) are equivalent with respect to the representation $U$ and function $\rho$ when
\begin{equation}
\Psi(z) = \mu(\sqrt{z}-1)^2
\end{equation}

The objective in Eq. (\ref{eq:roccodual}) is biconvex on $(U, L_p)$ and $(U, L_f)$. When $U$ is fixed, the optimal value of $L_p$ and $L_f$ can be derived in closed form. If variables $L_p$ and $L_f$ are fixed, Eq. (\ref{eq:roccodual}) turns into solving a Sylvester equation. Therefore, we optimize the objective by alternating optimization. When $U$ is fixed, we can take the partial derivative of Eq. (\ref{eq:roccodual}) regarding $L_p$ and $L_f$ and set them to zero. This way, we get the update rules for $L_p$ and $L_q$ as follows:
\begin{equation}
\begin{split}
L_p(i,j)=(\frac{\mu}{\mu+||u_i-u_j||^2_2})^2\\
L_f(i,j)=(\frac{\mu}{\mu+||u^i-u^j||^2_2})^2
\label{eq:update}
\end{split}
\end{equation}

If we only consider $L_p$ in Eq. (\ref{eq:roccodual}) and discard the $L_f$ part, we can rewrite Eq. (\ref{eq:roccodual}) with $L_p$ fixed as:
\begin{equation}
\min_U\frac{1}{2}||X-U||_F^2+\frac{\alpha}{2}\sum\nolimits_{i,j\in E_p}L_p(i,j)||U(e_i-e_j)||_2^2
\label{eq:roccop}
\end{equation}
where $e_i \in \mathbb{R}^n$ is an indicator vector with the $i$th element set to $1$ and others set to $0$. Notoriously, the problem enunciated in Eq. (\ref{eq:roccop}) is similar to the \texttt{RCC} one described in Eq. (\ref{eq:rcc}). 

Taking the derivative of Eq. (\ref{eq:roccop}) and setting it to zero, the solution equals to the following linear least-squares problem:
\begin{equation}
\begin{split}
& AU=X, \text{where}\\
& A = I + \alpha\sum\nolimits_{i,j\in E_p}L_p(i,j)(e_i-e_j)(e_i-e_j)^T
\end{split}
\label{eq:sylvesterA}
\end{equation}
Following a similar rationale, if we only consider $L_f$ in Eq. (\ref{eq:roccodual}), the optimal $U$ can be derived by solving yet another linear least-squares problem:
\begin{equation}
\begin{split}
& UB=X, \text{where}\\
& B = I + \alpha\sum\nolimits_{i,j\in E_f}L_f(i,j)(e^i-e^j)(e^i-e^j)^T
\end{split}
\label{eq:sylvesterB}
\end{equation}
where $e^i \in \mathbb{R}^p$ is an indicator vector.

Therefore, when $L_p$ and $L_f$ are fixed, the optimal $U$ equals to the solution of the following Sylvester equation:
\begin{equation}
AU+UB=2X
\label{eq:sylvester}
\end{equation}
where $A$ and $B$ are defined as in Eq. (\ref{eq:sylvesterA}) and (\ref{eq:sylvesterB}). \newline

In practice, we propose to address Eq. (\ref{eq:rocco}) by performing alternating updates of $L_p$ and $L_f$ (using Eq. (\ref{eq:update})) and solving the Sylvester equation in Eq. (\ref{eq:update}) until convergence. However, classical solutions for the latter problem get expensive when scaled up to massive datasets. A illustrative example of a common solver for the Sylvester Equation is the Bartels-Stewart algorithm \cite{bartels1972solution}, which has a time complexity of $\mathcal{O}(n^3+p^3)$. This issue turns this procedure not easily applicable in large-scale CoC problems.

To address this challenge, we propose to solve the Sylvester equation using the Kronecker product. Departing from the observation that matrices $A$ and $B$ are sparse, Eq. (\ref{eq:sylvester}) can be rewritten as 
\begin{equation}
\begin{split}
C\cdot vec(U) = vec(X), \text{where}\\
C = A\otimes I_p + B\otimes I_n
\end{split}
\label{eq:kronecker}
\end{equation}
Eq. (\ref{eq:kronecker}) is a linear least-squares problem on a sparse matrix $C$. A classical solution for sparse linear systems is the Conjugate Gradient method \cite{hestenes1952methods}. Note that $A$ and $B$ relate to the $k$-nearest neighbor graphs on the data with $n$ samples and $p$ features. Therefore, there are $\mathcal{O}(n)$, $\mathcal{O}(p)$, and $\mathcal{O}(np)$ non-zero entries in $A$, $B$ and $C$, respectively. The classic Conjugate Gradient method runs with the complexity of $\mathcal{O}(n^2p^2)$, which is even worse compared to the Bartels-Stewart algorithm (assuming large values for $n$ and $p$).

In this paper, we recognize that there is a critical property of matrix $C$ in Eq. (\ref{eq:kronecker}): $C$ is symmetric diagonally dominant, which enables the usage of a near linear method to solve Eq. (\ref{eq:kronecker}). 

\begin{definition}{Symmetric Diagonally Dominant}
	
	\noindent Matrix $A$ is symmetric diagonally dominant, if $A$ is symmetric and
	\begin{equation}
	A(i,i) \geq \sum_{i\neq j}|A(i,j)|
	\end{equation}
\end{definition}
The equality holds when $A$ is a Laplacian matrix. 

\begin{theorem}
	Matrix $A$ and $B$ defined in Eq. (\ref{eq:sylvesterA}) and (\ref{eq:sylvesterB}) are Symmetric Diagonally Dominant.
	\label{lm:sdd1}
\end{theorem}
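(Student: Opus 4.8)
The plan is to recognize that, up to the additive identity and the scalar multiplier, the matrix $A$ of Eq.~(\ref{eq:sylvesterA}) is exactly the weighted graph Laplacian of the sample graph $G_p$ with edge weights $L_p(i,j)$, and then to invoke the remark made right after the definition that a Laplacian satisfies the diagonal-dominance inequality with equality. Throughout I rely on $\alpha \ge 0$ and $\mu > 0$, so that each line-process weight $L_p(i,j) = \left(\mu/(\mu+||u_i-u_j||_2^2)\right)^2$ from Eq.~(\ref{eq:update}) is nonnegative (and likewise for $L_f$). Symmetry is immediate: each rank-one term $(e_i-e_j)(e_i-e_j)^T$ is symmetric, each $L_p(i,j)$ is a scalar, and $I$ is symmetric, so the nonnegative combination $A$ is symmetric; the same holds for $B$.

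Next I would compute the entries of $M := \sum_{i,j\in E_p} L_p(i,j)(e_i-e_j)(e_i-e_j)^T$. For a single edge, writing $v = e_i - e_j$, the outer product $vv^T$ places $+1$ in the diagonal slots $(i,i)$ and $(j,j)$, places $-1$ in the off-diagonal slots $(i,j)$ and $(j,i)$, and is zero elsewhere. Summing over all edges gives, for each index $k$, the diagonal $M(k,k) = \sum_{l:(k,l)\in E_p} L_p(k,l)$ and the off-diagonals $M(k,l) = -L_p(k,l)$ on edges and $0$ otherwise. Because every weight is nonnegative, $|M(k,l)| = L_p(k,l)$ on edges, so
\[
\sum_{l\neq k}|M(k,l)| = \sum_{l:(k,l)\in E_p} L_p(k,l) = M(k,k),
\]
which is the equality form of diagonal dominance, confirming that $M$ is a genuine (positive semidefinite) graph Laplacian.

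Finally I would add the identity and the scalar back in. For every $k$ we have $A(k,k) = 1 + \alpha\, M(k,k)$ and, for $k \ne l$, $A(k,l) = \alpha\, M(k,l)$, so
\[
A(k,k) = 1 + \alpha\, M(k,k) \ge \alpha\, M(k,k) = \alpha \sum_{l\neq k}|M(k,l)| = \sum_{l\neq k}|A(k,l)|,
\]
where the inequality simply drops the nonnegative constant $1$ and uses $\alpha\, M(k,k) \ge 0$. This is exactly the required inequality, so $A$ is symmetric diagonally dominant. Replacing $E_p$ by $E_f$, $L_p$ by $L_f$, and the indicator vectors $e_i$ by $e^i$, the identical argument yields the claim for $B$.

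The argument is essentially bookkeeping, so there is no substantive obstacle; the one point worth flagging is the sign/nonnegativity chain. The collapse $|M(k,l)| = L_p(k,l)$ and the Laplacian equality both hinge on $L_p(i,j) \ge 0$ (ensured by $\mu > 0$), and the shift by $+1$ together with the factor $\alpha \ge 0$ can only strengthen, never weaken, diagonal dominance. Once these hyperparameter conditions are made explicit, the result is immediate.
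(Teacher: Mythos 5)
Your proof is correct and follows essentially the same route as the paper's: both recognize $A = I + \alpha\,(\text{weighted graph Laplacian with nonnegative edge weights } L_p(i,j))$, use the fact that such a Laplacian satisfies the diagonal-dominance inequality with equality, and observe that adding the identity can only strengthen dominance. Your version merely makes the entry-level bookkeeping and the role of $\alpha \ge 0$ explicit where the paper invokes closure properties of Laplacians (nonnegative scaling and sums) abstractly.
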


\begin{proof}
	Let $N_{ij} \in \mathbb{R}^{n\times n}$ and $N_{i,j} = (e_i-e_j)(e_i-e_j)^T$, thus $N_{i,j}$ is a Laplacian matrix. Therefore, $L_p(i,j)N_{i,j}$ is a Laplacian matrix as well, since $L_p(i,j) \geq 0$. The sum of Laplacian matrices is also a Laplacian matrix, thus $N = \sum_{i,j \in E_p} L_p(i,j)N_{i,j}$ is also a Laplacian matrix with $N(i,i)=\sum_{i\neq j}|N(i,j)|$. Therefore $A=I+\alpha N$ with $A(i,i)=\sum_{i\neq j}|A(i,j)|+1$ is symmetric diagonally dominant. The same proof applies to $B$ as well.
\end{proof}
\begin{theorem}
	Matrix $A\otimes I_p$ is symmetric diagonally dominant if $A \in \mathbb{R}^{n\times n}$ is symmetric diagonally dominant.
	\label{th:sdd2}
\end{theorem}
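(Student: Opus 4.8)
The plan is to exploit the block structure of the Kronecker product. Writing $M = A \otimes I_p$, I would first recall that $M$ is an $np \times np$ matrix naturally partitioned into an $n \times n$ array of blocks of size $p \times p$, where the $(i,j)$-th block equals $A(i,j)\,I_p$. Equivalently, indexing rows and columns by pairs, the entry of $M$ in row $(i,k)$ and column $(j,l)$ (with $i,j \in \{1,\dots,n\}$ and $k,l \in \{1,\dots,p\}$) is $A(i,j)\,\delta_{kl}$, where $\delta_{kl}$ equals $1$ if $k=l$ and $0$ otherwise. This entrywise description is the only fact about $\otimes$ I will need.

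Symmetry is immediate: since $A = A^T$ and $I_p = I_p^T$, we have $M^T = A^T \otimes I_p^T = A \otimes I_p = M$. Alternatively, from the entrywise formula the $((i,k),(j,l))$ entry $A(i,j)\,\delta_{kl}$ is invariant under swapping $(i,k)$ with $(j,l)$, because $A$ is symmetric and $\delta_{kl}=\delta_{lk}$.

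The heart of the argument is to verify diagonal dominance row by row. Fixing a row $(i,k)$ of $M$, its diagonal entry is $M((i,k),(i,k)) = A(i,i)$. For the off-diagonal terms I would observe from the entrywise formula that $M((i,k),(j,l)) = A(i,j)\,\delta_{kl}$ is nonzero only when $l=k$; in particular the entries lying in the same diagonal block but off its within-block diagonal (that is, $j=i$ with $l\neq k$) all vanish. Hence the only surviving off-diagonal entries in row $(i,k)$ are the values $A(i,j)$ for $j \neq i$, located in columns $(j,k)$. Summing absolute values therefore gives $\sum_{(j,l)\neq(i,k)} |M((i,k),(j,l))| = \sum_{j\neq i}|A(i,j)|$, which by the symmetric-diagonally-dominant hypothesis on $A$ is at most $A(i,i) = M((i,k),(i,k))$. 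As this holds for every row, $M = A\otimes I_p$ is symmetric diagonally dominant.

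I do not expect a genuine obstacle here; the proof is essentially bookkeeping. The one point that requires care is the index translation between the $n\times n$ matrix $A$ and the $np\times np$ matrix $A\otimes I_p$, and in particular noticing that the identity factor $I_p$ forces every cross-index off-diagonal contribution ($l\neq k$) to drop out, so that each row of $M$ inherits \emph{exactly} the off-diagonal absolute-value sum of the corresponding row of $A$. This is precisely what makes the diagonal-dominance inequality transfer unchanged from $A$ to $A \otimes I_p$.
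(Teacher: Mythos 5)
Your proof is correct, and it is actually more careful than the paper's own argument. The paper takes a shortcut: it asserts that $A \otimes I_p = I_p \otimes A$ and hence that $A \otimes I_p$ is literally the block-diagonal matrix with copies of $A$ on its diagonal, from which row-wise diagonal dominance is immediate. But that identity is false as a matrix equation: Kronecker products do not commute, and $A \otimes I_p$ (whose $(i,j)$ block is $A(i,j)I_p$) agrees with the block-diagonal matrix $I_p \otimes A$ only up to conjugation by a perfect-shuffle permutation matrix. The paper's conclusion survives because symmetric diagonal dominance is invariant under simultaneous row--column permutations, but the proof as written glosses over this. Your entrywise computation --- row $(i,k)$ of $M = A \otimes I_p$ has diagonal entry $A(i,i)$, and its only nonzero off-diagonal entries are $A(i,j)$ for $j \neq i$, located in columns $(j,k)$ --- establishes exactly the same structural fact (each row of $M$ inherits precisely the off-diagonal absolute row sum of the corresponding row of $A$) without invoking any commutation identity, so nothing needs to be patched. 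Both arguments rest on the same underlying observation; yours simply does the index bookkeeping honestly, and in that sense it is the version one would want to see in print.
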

\begin{proof}
	$A\otimes I$ is symmetric, since both $A$ and $I_p$ are symmetric. Consequently, we have
	\begin{equation}
	M = A \otimes I = I \otimes A = 
	\begin{bmatrix}
	A  & \dots & 0 \\
	\vdots   & \ddots & \vdots \\
	0   & \dots & A
	\end{bmatrix} 
	\end{equation}
	where $A(i,i) \geq \sum_{i\neq j}|A(i,j)|$, thus $M(i,i) \geq \sum_{i\neq j}|M(i,j)|$. This turns $A\otimes I_p$ to be a symmetric diagonally dominant matrix.
\end{proof}   

As $C$ is symmetric diagonally dominant, many theoretic studies have shown that Eq. (\ref{eq:kronecker}) can be solved in near linear time (i.e. $\mathcal{O}(m\log(m))$) with respect to the $m$ non-zero entries in $C$ \cite{koutis2011nearly,cohen2014solving,kelner2013simple}. Besides these theoretic solvers, there are already toolboxes that can solve Eq. (\ref{eq:kronecker}) with empirically linear observation (e.g. Combinatorial MultiGrid (CMG) algorithm \cite{KoutisMT11}). 

In this paper, we propose to use CMG to solve Eq. (\ref{eq:kronecker}). 	
\subsection{Algorithm's Overview}
\label{sec:algorithm}
\texttt{ROCCO} is summarized in Algorithm \ref{alg:rocco}. As input, \texttt{ROCCO} takes a data matrix $X \in \mathbb{R}^{n\times p}$. Its output are clustering assignments of both samples $C_p$ and features $C_f$. Firstly, it constructs the graphs on samples $G_p$ and features $G_f$ separately. Following \cite{Shah12092017}, we also propose to use mutual $k$-nearest neighbor connectivity \cite{brito1997connectivity}, which is more robust than the commonly used $k$-NN graphs. \newline

\subparagraph*{\textbf{Initialization.}} We initialize the representation $U$ to $X$. The auxiliary variables $L_p(i,j)$ and $L_f(i,j)$ are assigned to $1$ whenever the edge $(i,j)$ exists in the graphs $G_p$ and $G_f$, respectively. Finally, the regularization parameters $\alpha$ and $\beta$ are initialized as follows:
\begin{equation}
%	\begin{split}
%	& \alpha = \frac{||X||_2}{||A-I||_2}\\
%	& \beta = \frac{||X||_2}{||B-I||_2}
%	\end{split}
\alpha = \frac{||X||_2}{||P||_2}, \beta = \frac{||X||_2}{||Q||_2}
\label{eq:reg}
\end{equation}
where $P =\sum\nolimits_{i,j\in E_p}L_p(i,j)(e_i-e_j)(e_i-e_j)^T$ and $Q=\sum\nolimits_{i,j\in E_f}L_f(i,j)$ $(e^i-e^j)(e^i-e^j)^T$. Eq. (\ref{eq:reg}) balances both the data and pairwise terms in Eq. (\ref{eq:rocco}). \newline

\subparagraph*{\textbf{Iteration.}} Then, \texttt{ROCCO} iterates on optimizing the proposed objective function by updating $L_p$, $L_f$ and $U$ alternatively until convergence (as described in Section \ref{sec:opt}). The parameter $\mu$ that controls the (non-)convexity of $\rho$ is proposed to be updated using graduated non-convexity. It begins with a locally convex approximation of the objective, obtained by setting a large enough $\mu$. Then, $\mu$ is automatically decreased every four iterations in order to introduce non-convexity into the objective on a gradual fashion. Under certain assumptions, such continuation schemes are known to attain solutions that are close to the global optimum \cite{mobahi2015theoretical}. Eq. (\ref{eq:reg}) is applied to update $\alpha$ and $\beta$ after every update of $\mu$, so that $\alpha$ and $\beta$ are automatically determined.
\begin{algorithm}[!t]
	\SetNoFillComment
	\SetKwInOut{Input}{Input}
	\SetKwInOut{Output}{Output}
	%		\underline{ROCCO} $X \in \mathbb{R}^{n\times p}$\;
	\Input{Data matrix $X \in \mathbb{R}^{n\times p}$.}
	\Output{Co-Cluster assignments $C_p$, $C_f$.}
	Construct mutual $K$-nearest neighbor graphs $E_p$ and $E_f$\; 
	Initialize $U=X$, $L_p(i,j)=1$, $L_f(i,j)=1$, $\alpha=\frac{||X||_2}{||P||_2}$, $\beta=\frac{||X||_2}{||Q||_2}$, $\mu_p \gg \max||x_i-x_j||^2_2$, $\mu_f \gg \max||x^i-x^j||^2_2$\;
	\While{not converge}{
		Update $L_p$ and $L_f$ using Eq. (\ref{eq:update})\;
		Update $A$ and $B$ using Eq. (\ref{eq:sylvesterA}) and (\ref{eq:sylvesterB})\;
		Update $U$ by solving Eq. (\ref{eq:kronecker}) using CMG \cite{KoutisMT11}\;
		Every four iterations, update $\alpha=\frac{||X||_2}{||P||_2}$, $\beta=\frac{||X||_2}{||Q||_2}$, $\mu_p=\frac{\mu_p}{2}$, $\mu_f=\frac{\mu_f}{2}$
	}
	%    \uIf{X is sparse documents}
	%    {C_p = $K$-means(U), C_f = $K$-means(U^T)\;}
	%    \uElse{else-block\;}
	
	\uIf{X is sparse}{
		$C_p$ = \texttt{KM}($U$)\tcp*{$K$ chosen by Silhouette Coef. \cite{rousseeuw1987silhouettes}}
		$C_f$ = \texttt{KM}($U^T$)\tcp*{$K$ chosen by Silhouette Coef. \cite{rousseeuw1987silhouettes}}
	}
	\Else{
		$C_p$ = \texttt{RCC}($U$) \;
		$C_f$ = \texttt{RCC}($U^T$) \;
	} 
	
	\Return Co-cluster assignments $C_p$ and $C_f$\;
	\caption{\texttt{ROCCO}}
	\label{alg:rocco}
\end{algorithm}
\newline \subparagraph*{\textbf{Co-Cluster Assignment.}}\texttt{RCC} authors \cite{Shah12092017} suggest to derive the clustering assignment automatically from the connected component of a new graph calculated on $U$ with a heuristic-based threshold. However, we empirically observed that this heuristic does not work for \texttt{ROCCO}. The main issue is that the threshold is calculated on the original input space, pushing the output to provide multiple single point clusters. To tackle this challenge, we propose to employ \texttt{RCC} explicitly on both rows and columns of the new representation $U$ to achieve the final clustering assignment. Intuitively, when facing a sparse input matrix $X$, we may again expect issues with the threshold originally proposed in \cite{Shah12092017}\footnote{\label{note1}Notoriously, the empirical results in Section \ref{Results} confirm this expected behavior.}. To mitigate this issue, we suggest to firstly employ a sparsity test on $X$. If positive, we then propose to simply adopt \texttt{KM} instead for final clustering assignment. The number of clusters $K$ is thereby determined by the Silhouette Coefficient \cite{rousseeuw1987silhouettes}. \newline

\subparagraph*{\textbf{Complexity Analysis.}}
The runtime complexity of \texttt{ROCCO} is three-fold: (1) graph construction, (2) representation learning, and (3) clustering assignment. For any input dataset $X \in \mathbb{R}^{n\times p}$, (1) the mutual $k$-nearest neighbor graph on either samples or features runs naively $\mathcal{O}(n^2p)$ and $\mathcal{O}(np^2)$, respectively. However, this step can be easily parallelized and there are many studies for approximated $k$-NN graphs construction which can significantly accelerate the runtime in practice \cite{dong2011efficient, chen2009fast}. The computational complexity of representation learning (2) is reduced to solving the Sylvester equation in Eq. (\ref{eq:rocco}). Theoretically, it can be solved in $\mathcal{O}(np\mbox{ }log(np))$. However, empirically\footnoteref{note1} we observe $\mathcal{O}(np)$. Finally, the co-clustering assignment procedure (3) runs either in $\mathcal{O}(np)$ or $\mathcal{O}(n^2p+np^2)$ using \texttt{RCC} and \texttt{KM}, respectively. Note that \texttt{KM} scales linearly, but the Silhouette-based model selection procedure scales quadratically.

\section{Experiments}
We aim to evaluate the benefits of \texttt{ROCCO} when compared	to other SoA methods for CoC by answering the following questions:
\begin{enumerate}[label=(\textbf{Q\arabic*})]
	\item Is the representation learned by \texttt{ROCCO} suitable for CoC?
	\item Is \texttt{ROCCO}'s clustering assignment fairly good in practice?
	\item Can \texttt{ROCCO} handle noisy and/or sparse inputs? 
	\item Is \texttt{ROCCO} robust to its hyperparameter setting?
	\item Does \texttt{ROCCO} empirically scale linearly in the input size?
\end{enumerate}
To handle these questions, we prepared an extensive empirical experimental setup. Its settings are presented below, followed by the testbed details and their results. Finally, we present a comprehensive discussion on the answers that they provide.

\subsection{Experimental Setup}
We briefly present the comparison partners and their settings, the datasets used to evaluate those, as well as the employed metrics. \newline
\subsubsection{SoA Benchmarkers}
To benchmark \texttt{ROCCO}, we employed 7 different methods from SoA in (automated) clustering (3) and CoC (4). They are enumerated as follows:\vspace{2pt}
\begin{enumerate}[label=(\textbf{\roman*}), wide=0pt]
	\item \textbf{\texttt{KM}}: $K$-means clustering: probably, the most widely used clustering algorithm;\vspace{2pt}
	\item \textbf{\texttt{HDBS}}: Hierarchical Density-Based Clustering \cite{campello2013density} eliminates \texttt{DBSCAN} \cite{ester1996density} need to cut-off the resulting dendrogram. Instead, the final clusters are automatically determined by traversing a tree of condensed view splits, allowing clusters of different densities. \vspace{2pt}
	\item \textbf{\texttt{RCC}}: Robust Continuous Clustering \cite{Shah12092017} learns a regularized graph representation for classical clustering. \vspace{4pt}
	%It generates clusters by the connected component in the new graph constructed from the learned representation.
	\item \textbf{\texttt{SCC}}: Spectral Co-Clustering \cite{dhillon2001co} treats the input data matrix as a bipartite graph and performs a normalized cut on the graph for clustering assignment.\vspace{2pt}
	\item \textbf{\texttt{SBC}}: Spectral Bi-Clustering \cite{kluger2003spectral} performs Single Value Decomposition (SVD) to approximate the original matrix, followed by \texttt{KM} to find the sample and feature clusters.\vspace{2pt}
	\item \textbf{\texttt{COBRA}}: Convex Bi-Clustering \cite{chic2017} learns a new representation regularized by both sample and feature graphs with a convex function. \vspace{2pt}
	\item \textbf{\texttt{COBRAr}}: To make a fair comparison, we also perform \texttt{RCC} or \texttt{KM} on the learned representation of \texttt{COBRA} for cluster assignment - following strictly the \texttt{ROCCO}'s specific procedure for this task. 		\newline
\end{enumerate}
\subparagraph*{\textbf{Implementations.}} We used Python implementations of \texttt{KM}, \texttt{SCC} and \texttt{SBC} from the package \bera{SciKit-Learn} \cite{scikit-learn}; a fast Python implementation of HDBSCAN from McInnes and Healy \cite{mcinnes2017accelerated}; a Matlab implementation of \texttt{RCC} \cite{Shah12092017} and an R implementation of \texttt{COBRA} \cite{chic2017}. 

\texttt{ROCCO} is implemented in Matlab \footnote{\texttt{ROCCO}'s source code will be made publicly available.}.\newline

\subparagraph*{\textbf{Hyperparameter Setting.}}\texttt{KM}, \texttt{SCC}, and \texttt{SBC} require the number of clusters to be predetermined. \texttt{HDBS} requires to set \bera{min\_cluster\_size} (i.e. the minimum cluster mass), hereby set to $10$. \texttt{RCC}, \texttt{COBRA} and \texttt{ROCCO} learn graphs from the data input. Hereby, all use the mutual $K$-nearest neighbor graph, where $K=10$. Nearest neighbors are calculated by Euclidean distance, except for Text Mining datasets - where cosine distance is used. \texttt{COBRA} is used with the default settings of "\bera{cobra\_validate}" function, where the regularization hyperparameter ($RH$) is automatically set by cross-validation - as suggested by the authors in \cite{chic2017} - over the following grid: $[10^{-3}, 10^{3}]$.

\subsubsection{Datasets}
We employ three different types of datasets: (i) Synthetic and Real-World ones. The latter concern two relevant application domains where CoC problem is common: (ii) Text Mining (documents) and (iii) Biomedicine (gene expression). Details about these types are provided below. 
\begin{enumerate}[label=(\textbf{\roman*}), wide]
	\item \textbf{Synthetic:} We generated synthetic datasets using the  function \bera{make\_biclusters} of Python package \bera{SciKit-Learn} \cite{scikit-learn}. It creates a matrix of small values and implants co-clusters with large values (e.g. Fig. \ref{fig:intro_heatmap}-a). We set both the number of samples and features to $100$, and the number of clusters to $5$. Moreover, we gradually increase the noise level (parameter of \bera{make\_biclusters}) from $10$ to $50$ (e.g. Fig. \ref{fig:intro_heatmap}-b). For each possible configuration pair, we generate $30$ datasets with random seed from $0$ to $29$. 
	
	\item \textbf{Text Mining:} We employ $5$ document datasets. \bera{Classic3} is composed of abstract of papers from three domains and was used in \cite{dhillon2001co}. We follow \cite{dhillon2003information} to generate 2 subsets of \bera{20-Newsgroup} \cite{lang1995newsweeder}: \bera{20news2} and \bera{20news10}. \bera{Reuters} and \bera{RCV1} used in \cite{Shah12092017} are also employed. Each document is represented as the term-frequency inverse document-frequency scores of the top $2000$ most frequent words in their corpus. 
	
	\item \textbf{Biomedicine:} We evaluate on $4$ gene expression datasets preprocessed by Nie \textit{et al.} \cite{nie2010efficient}: \bera{ALLAML}, \bera{Carcinom}, \bera{GLIOMA} and \bera{Lung}. Feature scaling is performed beforehand.
	%         \item \textbf{Computer Vision} We also evaluate on $2$ large-scale image datasets: \bera{Caltech101} and \bera{Pascal07} with \LMM{What is level 6 features?} level $6$ features derived from a deep neural network (i.e. \textbf{DeCAF} \cite{donahue2014decaf}) generated by Tommasi and Tuytelaars \cite{tommasi2014testbed}. 
\end{enumerate}
Table \ref{tab:datasets} shows statistics about the real-world datasets (ii,iii).

\subsubsection{Evaluation metrics}
In the synthetic datasets, we know both the true cluster labels for both samples and features. In the real-world, we only know the sample labels. Therefore, we use Normalized Mutual Information (NMI) \cite{vinh2010information} for evaluation, which measures the dependence between true labels and the obtained labels. However, other studies \cite{Shah12092017} point that NMI value is biased on the number of clusters. Consequently, different numbers of clusters are often obtained by the evaluated methods. To address this issue, we further evaluate the clustering performance using Adjusted Mutual Information (AMI) \cite{vinh2010information} and Adjusted Rand Index (ARI) \cite{hubert1985comparing}, whenever the true labels are available.

\subsection{Testbeds and Results}
\label{Results}
In this Section, we describe the testbeds built to answer the proposed research questions, pointing out the obtained results. \newline

\subparagraph*{\textbf{Representation Learning (Q1).}}
In this scenario, we evaluate the goodness of the representation learned by \texttt{ROCCO} for CoC. 

Regarding the synthetic datasets, we compare the learned representations by visualizing the resulting heatmaps in Figs. \ref{fig:intro_heatmap} and \ref{fig:heatmap}. Figs. \ref{fig:intro_heatmap}-(a,b) illustrate well this problem setting: the original synthetic datasets without and with noise of level $50$, respectively. Fig. \ref{fig:intro_heatmap}-(c) already illustrated the behavior of \texttt{COBRA} by tuning its $RH$ as originally suggested in \cite{chic2017} (i.e. $RH=10$). However, we extended this evaluation by plotting \texttt{COBRA}'s representation for different levels of $RH$, together with either the samples/features clustering made by \texttt{RCC}. The resulting heatmaps are depicted in Fig. \ref{fig:heatmap}.
\begin{table}[!t]
	\begin{center}
		\caption{Summary statistics of the real-world datasets.}
		\scalebox{0.89}{\begin{tabular}{l|lrrr}
				\cline{1-5}\noalign{\smallskip}
				\textbf{Domain} & \textbf{Name} & \textbf{Samples} & \textbf{Features} & \textbf{Clusters} \\
				\cline{1-5}\noalign{\smallskip}
				&\bera{Classic3} & 3891 & 2000 & 3 \\
				&\bera{20news2} & 1777 & 2000 & 2 \\
				\textit{\textbf{Text Mining}}&\bera{20news10} & 9607 & 2000 & 10 \\
				&\bera{Reuters} & 9082 & 2000 & 50 \\
				&\bera{RCV1}  & 10000 & 2000 & 4 \\ \hline
				&\bera{ALLAML} & 72 & 7129 & 2\\
				&\bera{Carcinom} & 174 & 9182 & 11\\
				\textit{\textbf{Biomedicine}}&\bera{GLIOMA} & 50 & 4434 & 4\\
				&\bera{Lung}  & 203 & 3312 & 5\\ 
				% 					&\bera{Pancancer}  & 801 & 20262 & 5\\
				\cline{1-5}\noalign{\smallskip}
				% \textit{\textbf{Computer}}&\bera{Caltech101} & 8049 & 4096 & 97 \\
				% 					\textit{\textbf{Vision}}&\bera{Pascal07}  & 9960 & 4096 & 20 \\ \hline
				% 					\cline{1-5}\noalign{\smallskip}
		\end{tabular}}
		\vspace{-0.5cm}
		\label{tab:datasets}
	\end{center}
\end{table}
The real-world datasets are used as follows: \texttt{KM} is performed over the representations $U$ learned by \texttt{ROCCO}, \texttt{RCC} and \texttt{COBRA}. In this case, \texttt{KM} receives the true number of classes as input for either samples and features. \texttt{KM} is also evaluated when using directly over the input matrix, along with \texttt{SCC} and \texttt{SBC}. Similarly, these algorithms are also fed with the real number of classes. If the true number of feature clusters is not known, we assume it to be equal to the samples one. \texttt{HDBS} was deliberately discarded in this evaluation since it is not possible to set the number of clusters apriori for this method. NMI and AMI values for the obtained results are displayed in Table \ref{tab:results}. \newline

\subparagraph*{\textbf{Real-World Problems (Q2,Q3).}} 
Hereby, we evaluate the performance of \texttt{ROCCO} in practice, where the number of clusters is unknown apriori. \texttt{HDBS},  \texttt{RCC},  \texttt{COBRA},  \texttt{COBRAr} and  \texttt{ROCCO} automatically output both row and column clusters. However, \texttt{KM}, \texttt{SCC}, and \texttt{SBC} require the number of clusters to be defined beforehand. To tackle such issue, we adopt a grid search procedure over the number of clusters on $[2, 20]$, using then the Silhouette Coefficient \cite{rousseeuw1987silhouettes} for model selection purposes. The resulting methods are then denoted as \texttt{autoKM}, \texttt{autoSCC}, and \texttt{autoSBC}, respectively. To allow a fair comparison in this particular setup, we employ the sparsity test (i.e. document datasets are typically highly sparse) already proposed in the cluster assignment stage of \texttt{ROCCO} also for \texttt{RCC} and \texttt{COBRAr}.

Again, we devised two evaluation test beds: a (i) synthetic and a (ii) real-world one. The first (i) consists on synthetic datasets with gradually increased noise levels (from $10$ to $50$). The (ii) real-world datasets were used \textit{as is}. Results using the ARI for the clustering procedures over either the samples or the features of the synthetic datasets (i) are depicted in Fig. \ref{fig:Syn}. NMI and AMI values for the results obtained over the (ii) real-world datasets are displayed in Table \ref{tab:results2}. \newline
\begin{figure}[!t]
	\centering
	\begin{subfigure}[b]{0.377\linewidth}
		\centering
		\includegraphics[trim=1.1cm 1.28cm 2.18cm 0.0cm, width=\linewidth,clip=TRUE]{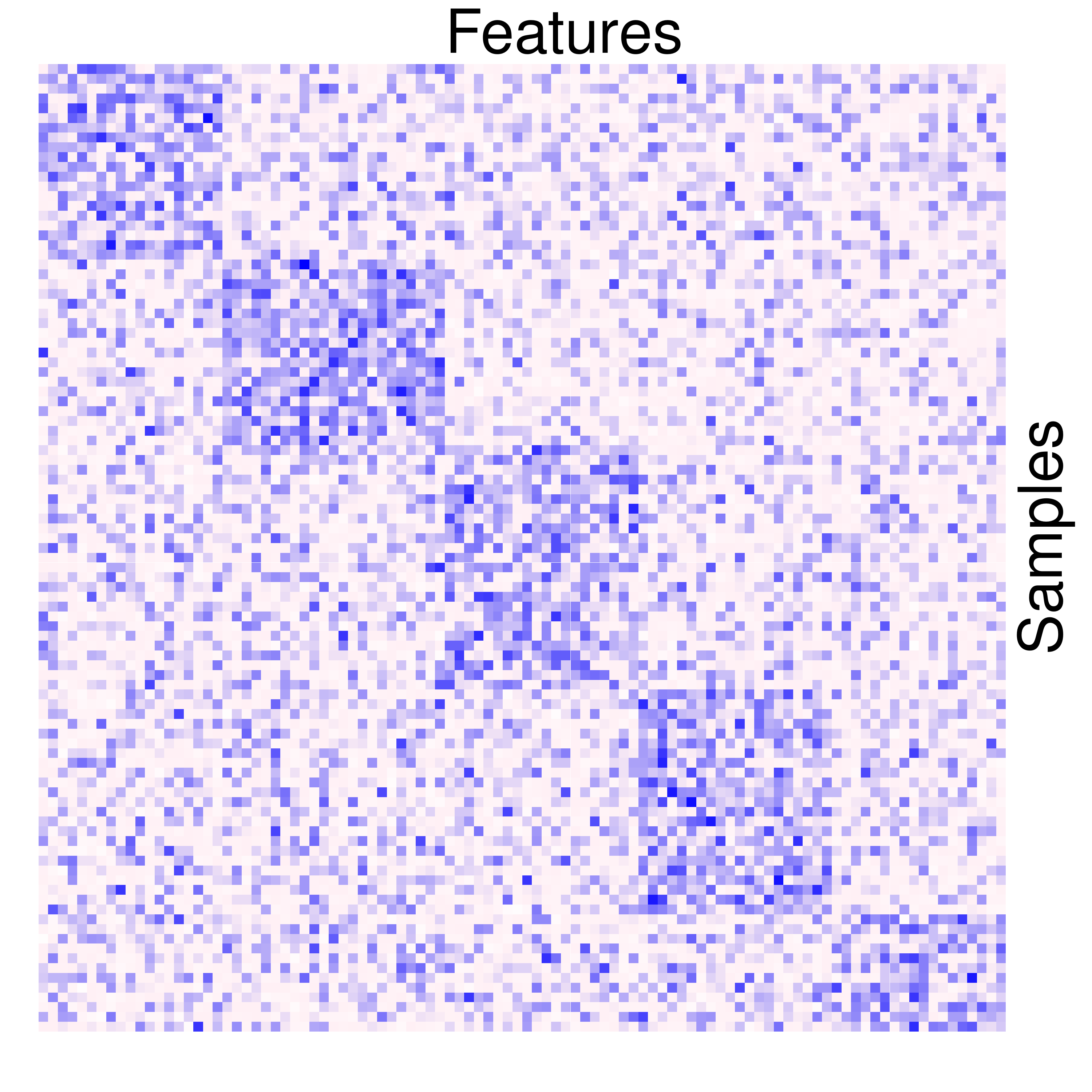}
		\label{figheatmap-1} 
		\vspace{-0.6cm}\caption{\texttt{COBRA} with $RH=1$.}
	\end{subfigure}\hspace{0.01cm}
	\begin{subfigure}[b]{0.395\linewidth}
		\centering
		\includegraphics[trim=1.1cm 1.28cm 0.16cm 0.0cm, width=\linewidth,clip=TRUE]{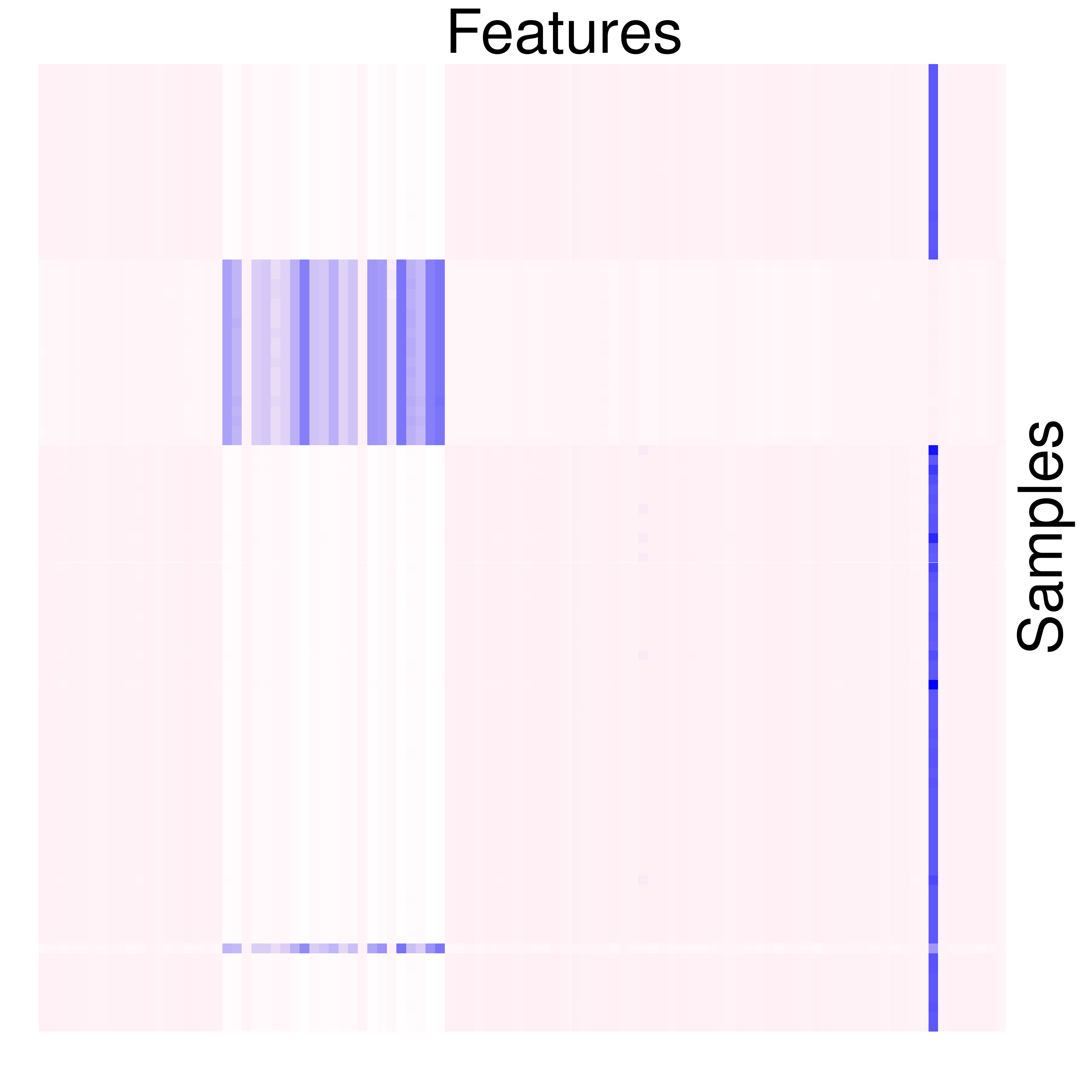} 
		\label{figheatmap-2} 
		\vspace{-0.6cm}\caption{\texttt{COBRA} with $RH=100$.}
	\end{subfigure}\\ \vspace{-0.05cm}
	\begin{subfigure}[b]{0.377\linewidth}
		\centering
		\includegraphics[trim=1.1cm 0.75cm 2.18cm 1.7cm, width=\linewidth,clip=TRUE]{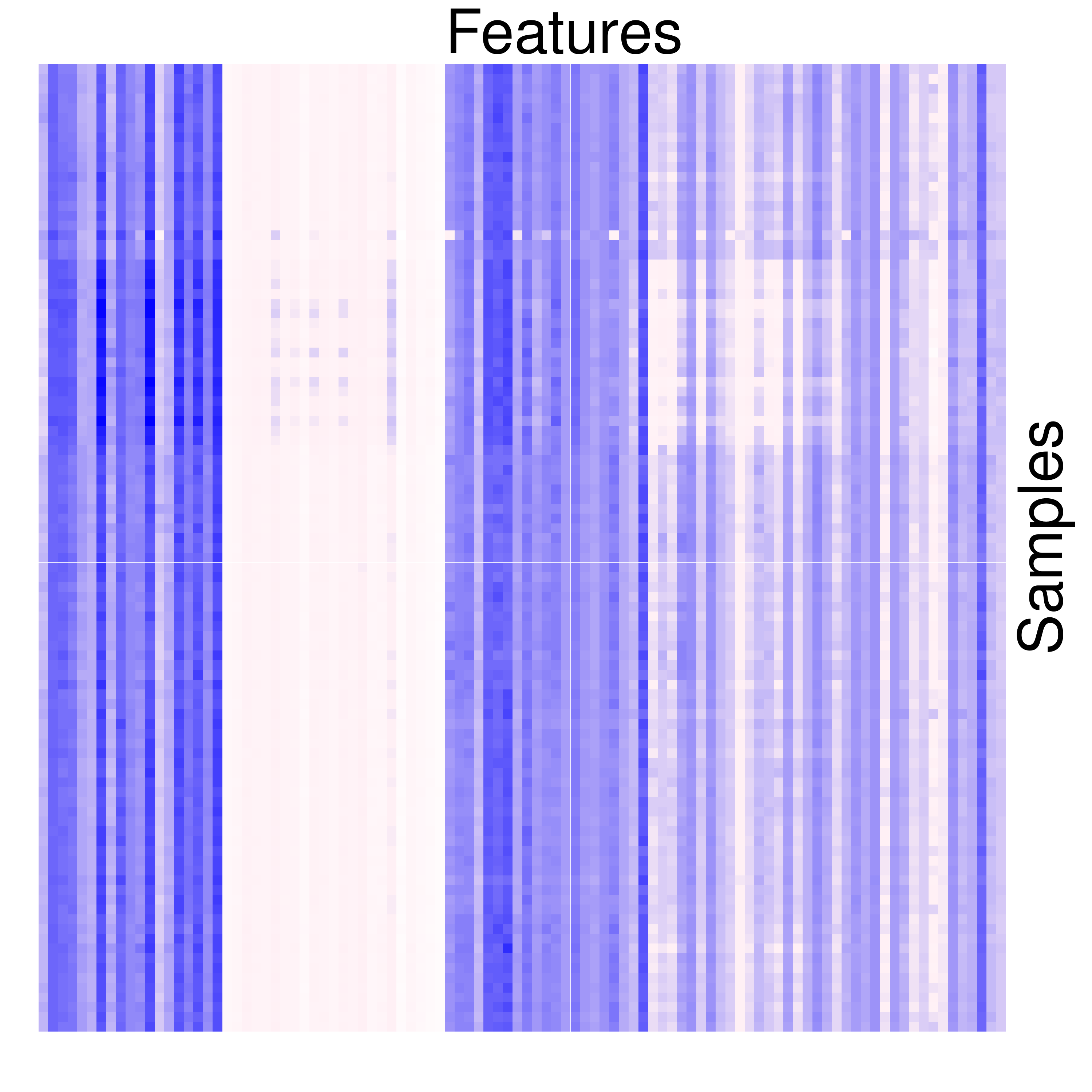}
		\label{figheatmap-3} 
		\vspace{-0.6cm}\caption{\texttt{COBRA} with $RH=1000$.}
	\end{subfigure}\hspace{0.01cm}
	\begin{subfigure}[b]{0.395\linewidth}
		\centering
		\includegraphics[trim=1.1cm 0.75cm 0.16cm 1.7cm, width=\linewidth,clip=TRUE]{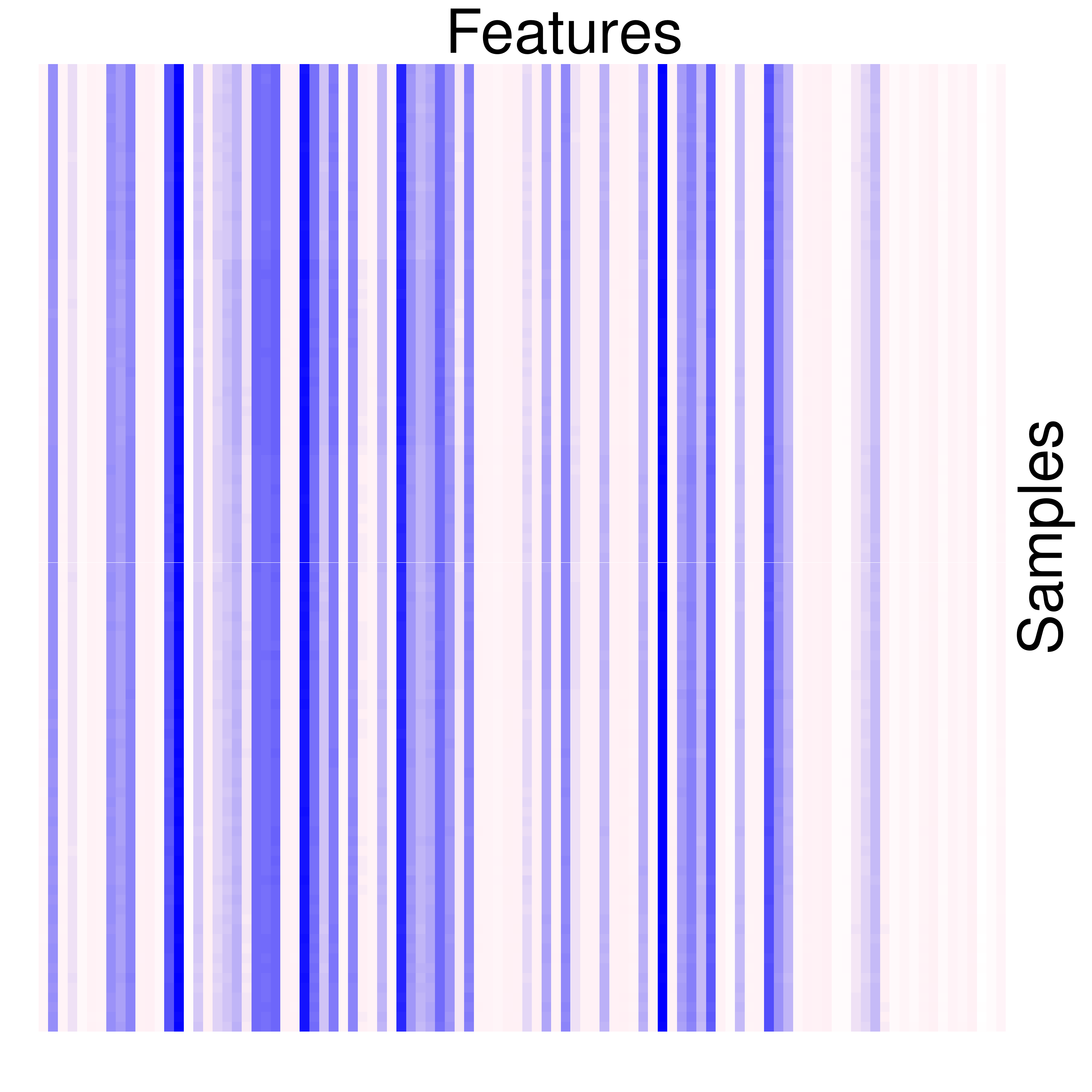}
		\label{figheatmap-4} 
		\vspace{-0.6cm}\caption{\texttt{RCC} over the samples.}
	\end{subfigure}\vspace{-0.05cm}
	\begin{subfigure}[b]{0.377\linewidth}
		\centering
		\includegraphics[trim=1.1cm 0.75cm 2.18cm 1.7cm, width=\linewidth,clip=TRUE,angle=90]{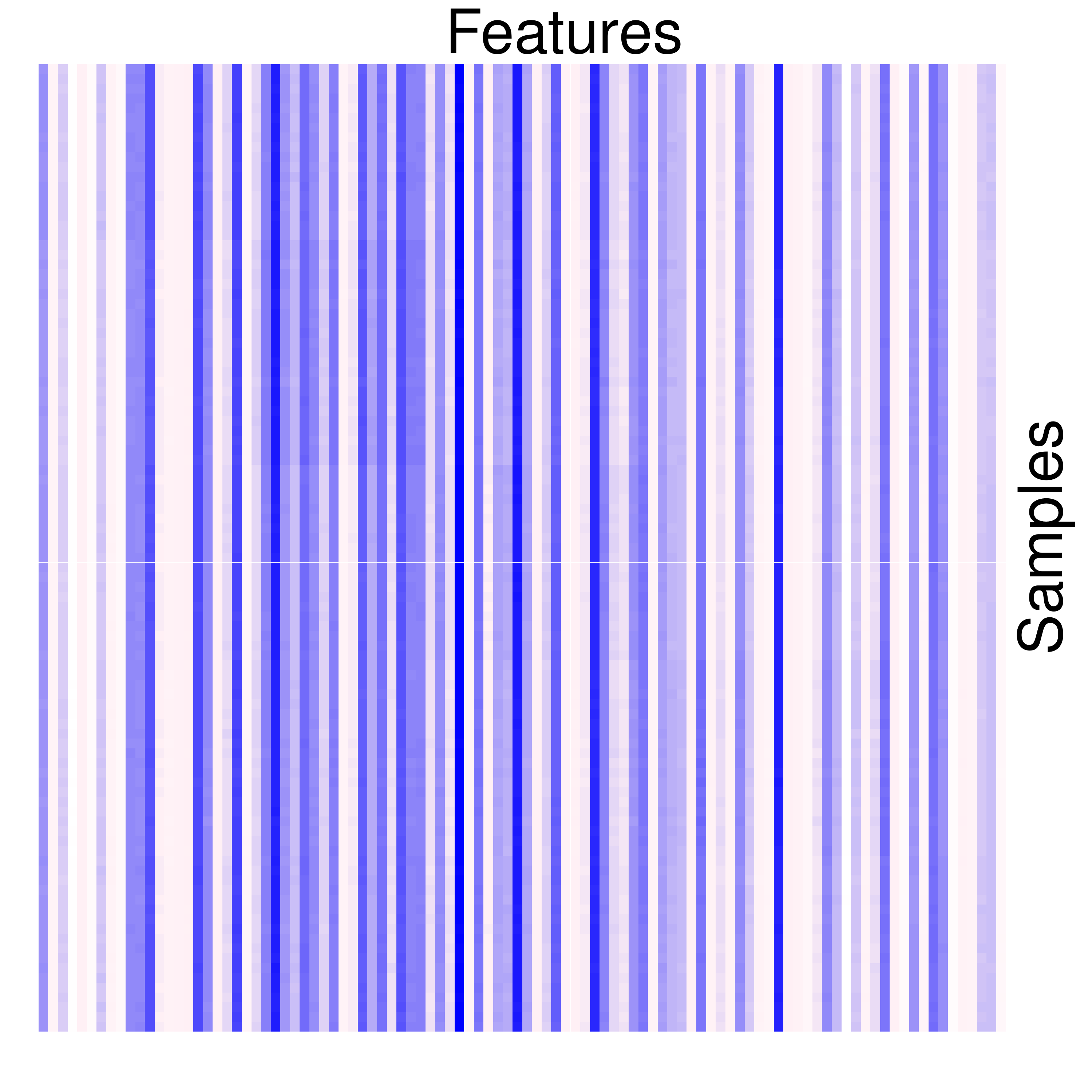}
		\label{figheatmap-5} 
		\vspace{-0.5cm}\caption{\texttt{RCC} over the features.}
	\end{subfigure}\hspace{0.01cm}
	\begin{subfigure}[b]{0.395\linewidth}
		\centering
		\includegraphics[trim=1.1cm 0.75cm 0.16cm 1.7cm, width=\linewidth,clip=TRUE]{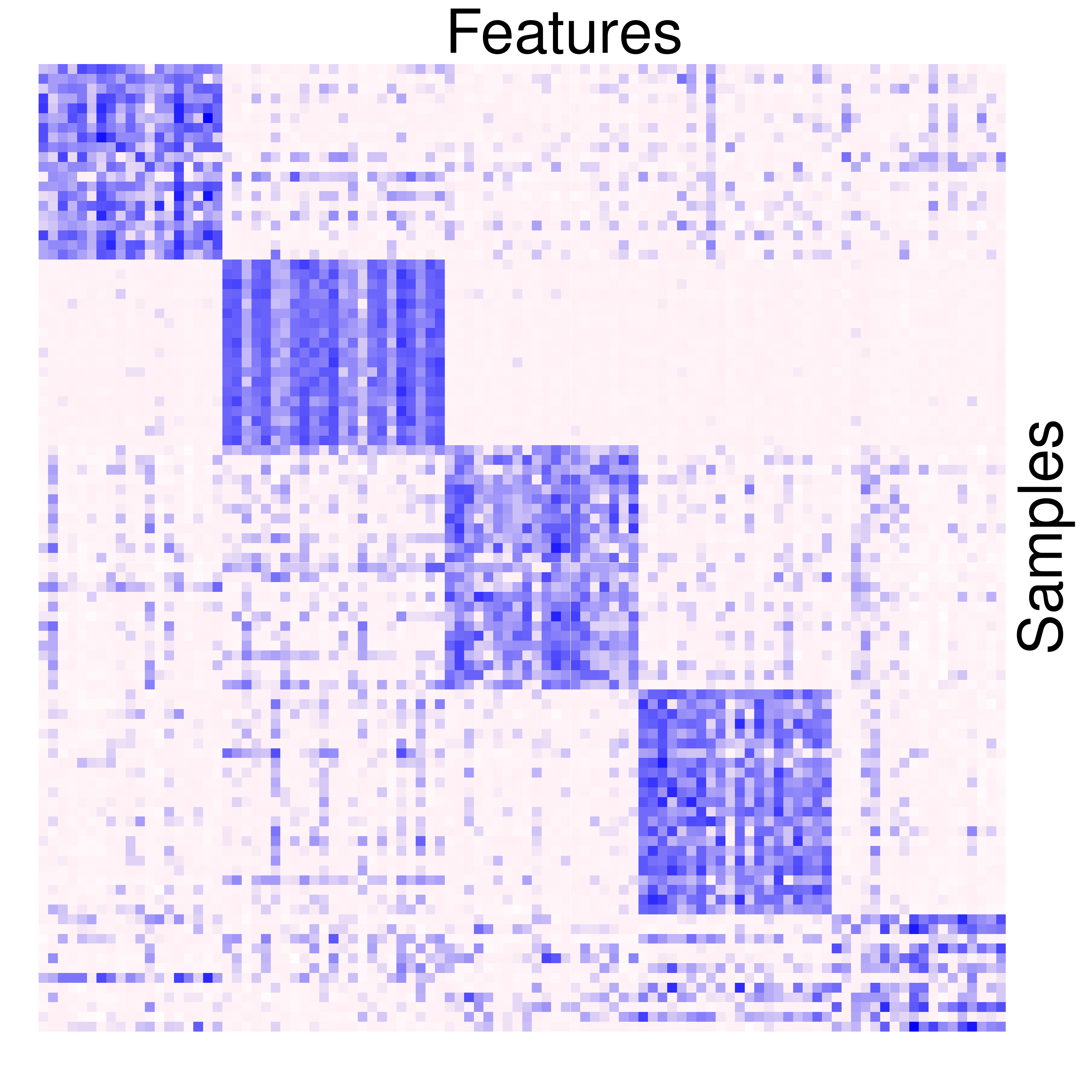}
		\label{figheatmap-6} 
		\vspace{-0.65cm}\caption{\texttt{ROCCO}.}
	\end{subfigure}%%
	\caption{Comparison between the Representations obtained from \texttt{COBRA}, \texttt{RCC} and \texttt{ROCCO}.} 
	\label{fig:heatmap}
\end{figure}     
\begin{table*}
	\center 
	\caption{Representation Learning Results. \texttt{COBRA}, \texttt{RCC} and \texttt{ROCCO} representations were used as input for \texttt{KM}. True number of clusters was given to all the compared methods. Last row depicts the methods average rank per metric.}
	\vspace{-0.4cm}
	\vspace{0.2cm}
	\centering
	\scalebox{0.7}{\begin{tabular}{|l||cc|cc|cc|cc|cc|cc|cc|cc}
			\hline
			& \multicolumn{2}{c|}{\texttt{KM}} & \multicolumn{2}{c|}{\begin{tabular}{@{}c@{}}\texttt{RCC}\\\texttt{+KM}\end{tabular}} & \multicolumn{2}{c|}{\texttt{SCC}} & \multicolumn{2}{c|}{\texttt{SBC}} & \multicolumn{2}{c|}{\begin{tabular}{@{}c@{}}\texttt{COBRA}\\\texttt{+KM}\end{tabular}} & \multicolumn{2}{c|}{\begin{tabular}{@{}c@{}}\texttt{ROCCO}\\\texttt{+KM}\end{tabular}} \\ \hline
			& \small{NMI} & \small{AMI}  & \small{NMI} & \small{AMI}  & \small{NMI} & \small{AMI}  & \small{NMI} & \small{AMI}  & \small{NMI} & \small{AMI}  & \small{NMI} & \small{AMI}    \\     \hline  \hline
			\bera{Classic3} & 0.706 & 0.703 & 0.909 & 0.908 & 0.908 & 0.905 & 0.911 & 0.908 & 0.716 & 0.714&\textbf{0.915} &  \textbf{0.914} \\ 
			\bera{20news2} & 0.657  & 0.650 & 0.659 & 0.654 & 0.005 & 0.001 & 0.010   & 0.001 & 0.620 &0.618 & \textbf{0.806} & \textbf{0.802} \\ 
			%			20news5 & 0.168 & 0.250 & \textbf{0.485} & 0.283   & 0.301 & 0.438 & \\ 
			\bera{20news10} & 0.219 & 0.201 & 0.314 & 0.266 & \textbf{0.475} & \textbf{0.396} & 0.264   & 0.192 & 0.123 & 0.102 & 0.438 & 0.394 \\ 
			\bera{Reuters} &  0.421 & 0.307 & 0.475 & 0.362 & 0.408 & 0.316 & 0.137 & 0.077 &  0.446  & 0.332&  \textbf{0.540} & \textbf{0.397} \\ 
			\bera{RCV1} & \textbf{0.404}& \textbf{0.398} & 0.394 & 0.389 & 0.343 & 0.310 & 0.049 & 0.044 &  0.398  & 0.395 &  \textbf{0.403} & 0.393 \\
			\bera{ALLAML} & 0.129 & 0.117 & 0.007 & 0.008 & \textbf{0.189} & \textbf{0.177} & 0.125 & 0.112 &  0.171  & 0.158 &  0.154 & 0.141 \\ 
			\bera{Carcinom} & 0.648 & 0.588 & 0.778 & 0.719 & 0.373 & 0.246 & 0.443  & 0.341 & 0.651 & 0.593 &  \textbf{0.794} & \textbf{0.745} \\ 
			\bera{GLIOMA} &  \textbf{0.537} & \textbf{0.478} & 0.484 & 0.411 & 0.085 & 0.001 & 0.151 & 0.073 &  0.507 &  0.450 &  0.484 & 0.411  \\ 
			\bera{Lung} & 0.658 & 0.557 & 0.649 & 0.557 & 0.391 & 0.368 & 0.451 & 0.366 & 0.661 & 0.562 &   \textbf{0.711} &  \textbf{0.645} \\ \hline
			% 				\bera{Caltech101} & 0.723 & 0.658 & \textbf{0.737} & \textbf{0.680} & 0.569 & 0.500 & 0.484 & 0.389 & 0.710 & 0.644 &  0.716  &  0.659 \\  
			%                 \bera{Pascal07} & \textbf{0.364} & \textbf{0.353} & 0.341 & 0.336 & 0.322 & 0.310 & 0.310 & 0.291 & 0.353 & 0.343 &   0.337 &  0.328 \\ \hline         
			$\varnothing$ \textbf{Rank}  &3.444  & 3.500 &3.277  &3.166 & 4.444  & 4.166 &4.777  & 5.111 & 3.333 & 3.222 & \textbf{1.722} & \textbf{1.833} \\ \hline 
	\end{tabular}}   
	\label{tab:results}
\end{table*}			
\begin{table*}[!t]
	\center 
	\caption{Real-world Benchmark where the real $K$ is not given. \texttt{autoKM}, \texttt{autoSCC}, and \texttt{autoSBC} follow a grid-search procedure over a pool of $K$ values to then select the best model using the Silhouette Coef.. The remaining methods choose $K$ automatically.}
	\vspace{-0.4cm}
	\vspace{0.2cm}
	\centering
	\scalebox{0.7}{\begin{tabular}{|l||cc|cc|cc|cc|cc|cc|cc|cc|cc|cc}
			\hline
			& \multicolumn{2}{c|}{\texttt{autoKM}} & \multicolumn{2}{c|}{\texttt{HDBS}} & \multicolumn{2}{c|}{\texttt{RCC}} & \multicolumn{2}{c|}{\texttt{autoSCC}} & \multicolumn{2}{c|}{\texttt{autoSBC}} & \multicolumn{2}{c|}{\texttt{COBRA}} & \multicolumn{2}{c|}{\texttt{COBRAr}} & \multicolumn{2}{c|}{\texttt{ROCCO}}
			\\ \hline
			& \small{NMI} & \small{AMI}  & \small{NMI} & \small{AMI}  & \small{NMI} & \small{AMI}  & \small{NMI} & \small{AMI}  & \small{NMI} & \small{AMI}  & \small{NMI} & \small{AMI} & \small{NMI} & \small{AMI} & \small{NMI} & \small{AMI}    \\     \hline  \hline
			\bera{Classic3} & 0.704 & 0.701 & 0.019 & 0.014 & 0.903 & 0.903 & 0.906 & 0.903 & \textbf{0.914} & 0.912 & 0.362 & 0.001 & 0.715 & 0.713 & \textbf{0.914} & \textbf{0.913} \\ 
			\bera{20news2} & 0.293  & 0.169 & 0.052 & 0.050 & 0.391 & 0.333 & 0.005 & 0.001 & 0.010 & 0.001 & 0.303 & 0.001 & 0.358 & 0.212 & \textbf{0.806} & \textbf{0.802} \\ 
			%			20news5 & 0.168 & 0.250 & \textbf{0.485} & 0.283   & 0.301 & 0.438 & \\ 
			\bera{20news10} & 0.290 & 0.275 & 0.155 & 0.071 & 0.349 & 0.333 & 0.274   & 0.098 & 0.037 & 0.003 & \textbf{0.500} & 0.001 & 0.290 & 0.279 & 0.495 & \textbf{0.479} \\ 
			\bera{Reuters} &  0.271 & 0.160 & 0.172 & 0.071 & 0.252 & 0.146 & 0.280 & \textbf{0.168} &  0.034  & 0.018 &  \textbf{0.479} & 0.001 & 0.270 & 0.159 & 0.295 & 0.163 \\ 
			\bera{RCV1} & 0.466 & 0.311 & 0.165 & 0.140 & 0.454 & 0.325 & 0.346 & 0.313 & 0.012 & 0.001 & 0.367 & 0.001 & \textbf{0.483} & 0.346 & 0.462 & \textbf{0.363} \\
			\bera{ALLAML} & 0.154 & 0.141 & 0.001 & 0.001 & 0.007 & 0.001 & 0.189 & \textbf{0.177} &  0.125  & 0.112 &  0.208 & 0.023 & 0.001 & 0.001 & \textbf{0.279} & 0.158 \\ 
			\bera{Carcinom} & 0.659 & 0.599 & 0.499 & 0.332 & 0.474 & 0.215 & 0.388  & 0.155 & 0.270 & 0.099 &  0.680 & 0.062 & 0.430 & 0.173 & \textbf{0.791} & \textbf{0.743} \\ 
			\bera{GLIOMA} &  \textbf{0.630} & \textbf{0.432} & 0.398 & 0.300 & 0.476 & 0.379 & 0.050 & 0.011 & 0.154 &  0.089 &  0.587 & 0.001 & 0.174 & 0.064 & 0.484 & 0.411  \\ 
			\bera{Lung} & 0.550 & 0.314 & 0.401 & 0.290 & 0.557 & 0.303 & 0.306 & 0.128 & 0.505 & 0.287 & 0.496 &  0.319 & \textbf{0.737} & \textbf{0.605} & 0.602 & 0.389  \\ \hline
			%             \bera{Caltech101} &  0.306 & 0.124 & 0.408 & 0.200  & 0.697 & 0.569 & 0.229 & 0.078 &  0.323 &  0.124 & 0.680 & 0.001 & \textbf{0.744} & \textbf{0.676} & 0.738 & 0.596  \\ 
			%             \bera{Pascal07} &  0.124 & 0.061 & 0.029 & 0.013 & 0.119 & 0.033 & 0.195 & 0.097 &  0.196 &  0.097 & 0.014 & 0.001 & \textbf{0.395} & \textbf{0.274} & 0.375 & 0.224  \\ \hline
			$\varnothing$ \textbf{Rank}  & 3.722 & 3.555 & 6.500 & 5.555 & 4.111 & 3.833 & 5.777 & 4.722  & 6.388 & 5.944 & 3.333 & 6.944 & 4.333 & 4.000 & \textbf{1.833} & \textbf{1.444} \\ \hline 
	\end{tabular} } 
	\label{tab:results2}
\end{table*}        
\begin{figure*}[!h]
	\centering    
	\begin{subfigure}[b]{0.38\linewidth}
		\centering
		\includegraphics[trim=0.0cm 0cm 1cm 2.5cm, width=\linewidth,clip=TRUE]{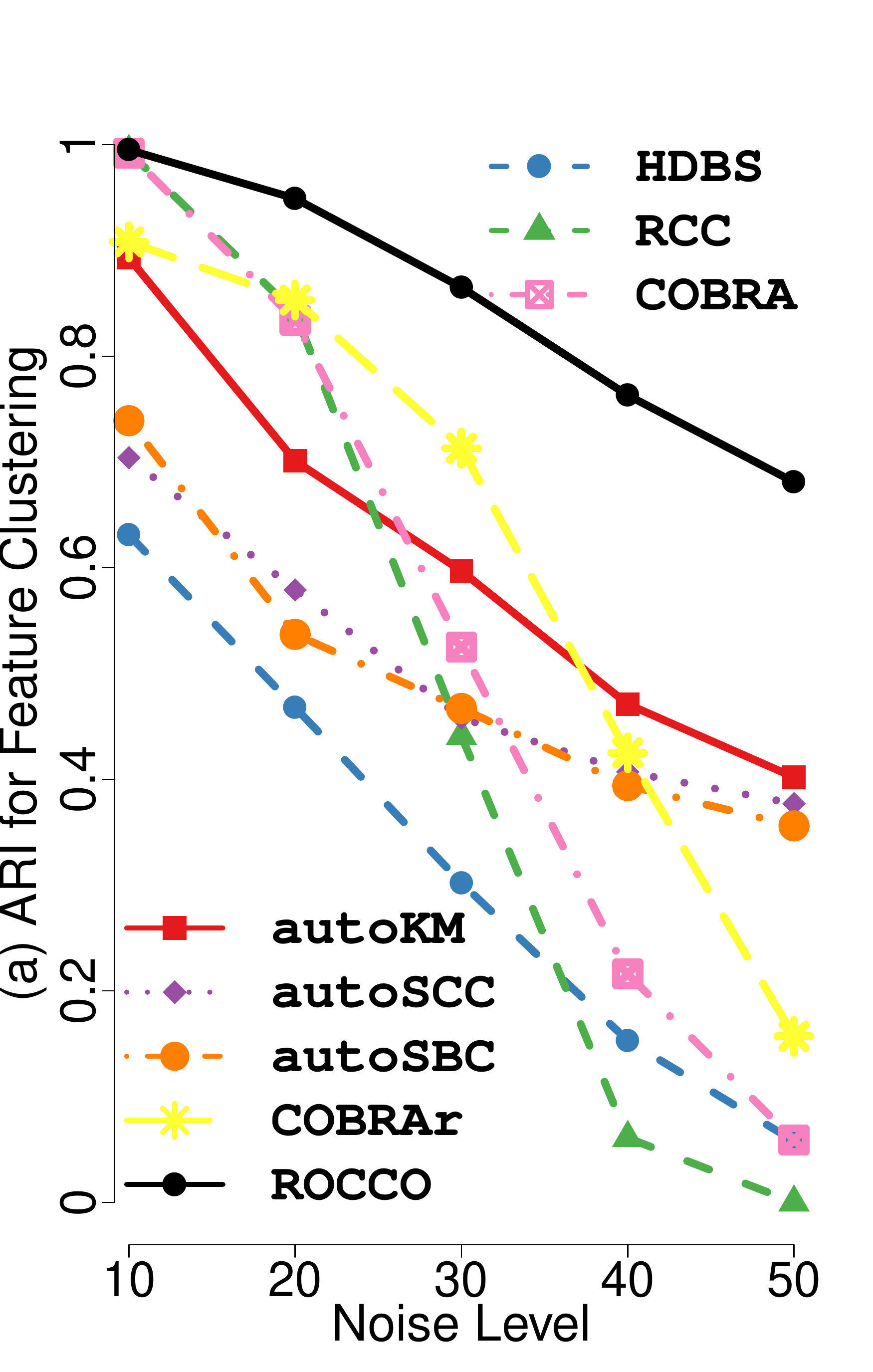}
		\label{} 
		\vspace{-0.7cm}
	\end{subfigure}
~~~~~~
	\begin{subfigure}[b]{0.38\linewidth}
		\centering
		\includegraphics[trim=0.0cm 0cm 1cm 2.5cm, width=\linewidth,clip=TRUE]{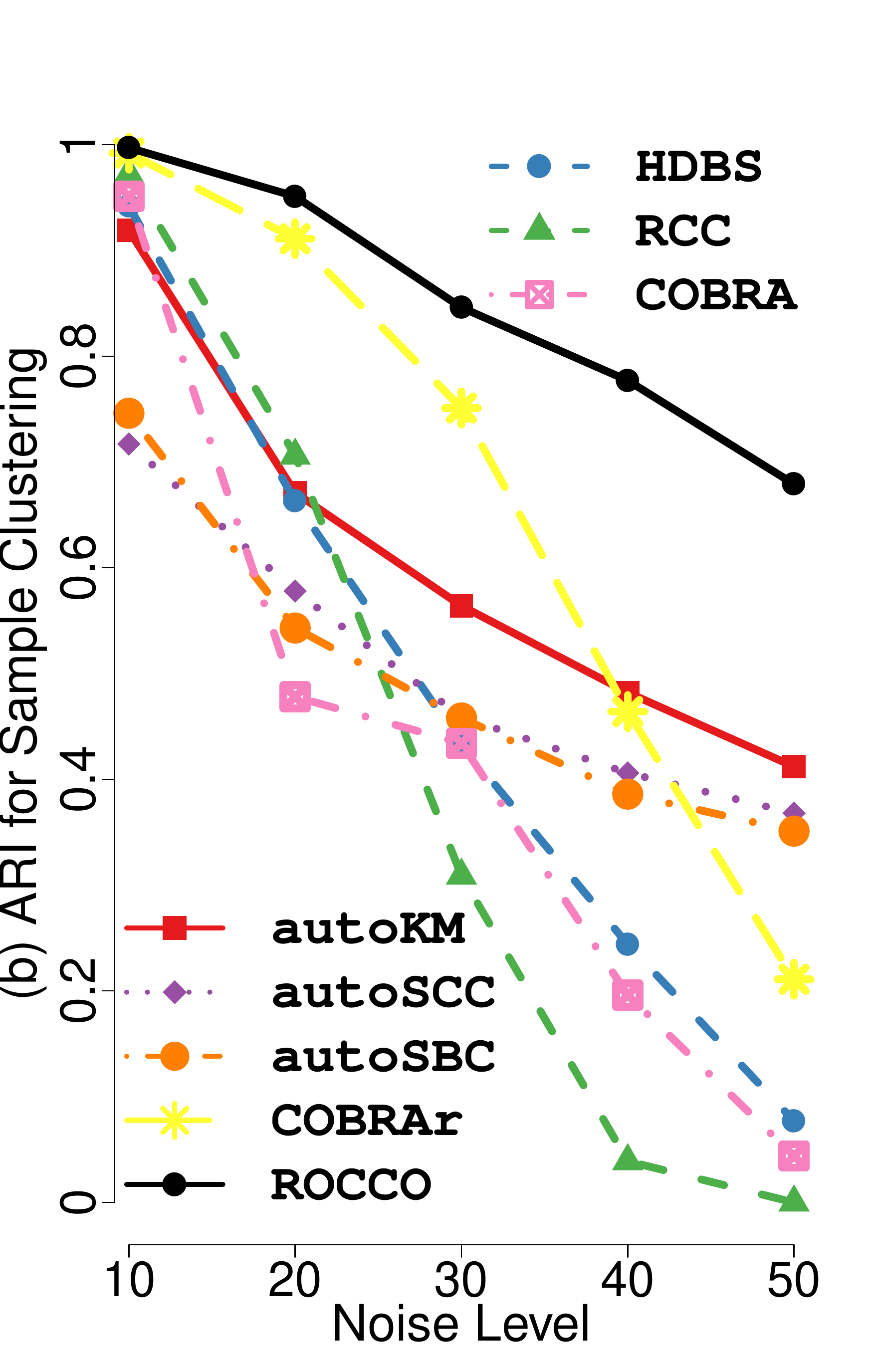}
		\label{} 
		\vspace{-0.7cm}
	\end{subfigure}
	\caption{Noise-based benchmark on Synthetic datasets over either samples and features using ARI. Best viewed in color.}
	\label{fig:Syn}
	\vspace{-2mm}
\end{figure*}
\subparagraph*{\textbf{Robustness (Q4).}}
\texttt{ROCCO} possesses two hyperparameters used to construct the initial graph: (i) $K$ for the mutual $K$-nearest neighbor graph; and (ii) the used distance function. We empirically evaluate its robustness over two grid of values defined as $K=\{7..13\}$ and \bera{distance\_Function = \{(E)uclidean, (M)anhattan, (C)osine\}}, respectively. Fig. \ref{fig:sensitivity} illustrates the results obtained on two real-world datasets: \bera{Classic3} and \bera{Carcinom}.  \newline
\begin{figure}[!t]
%	\vspace*{-0.3cm}
	\includegraphics[trim=1.25cm 0.65cm 1.0cm 1.5cm, width=0.55\linewidth,clip=TRUE]{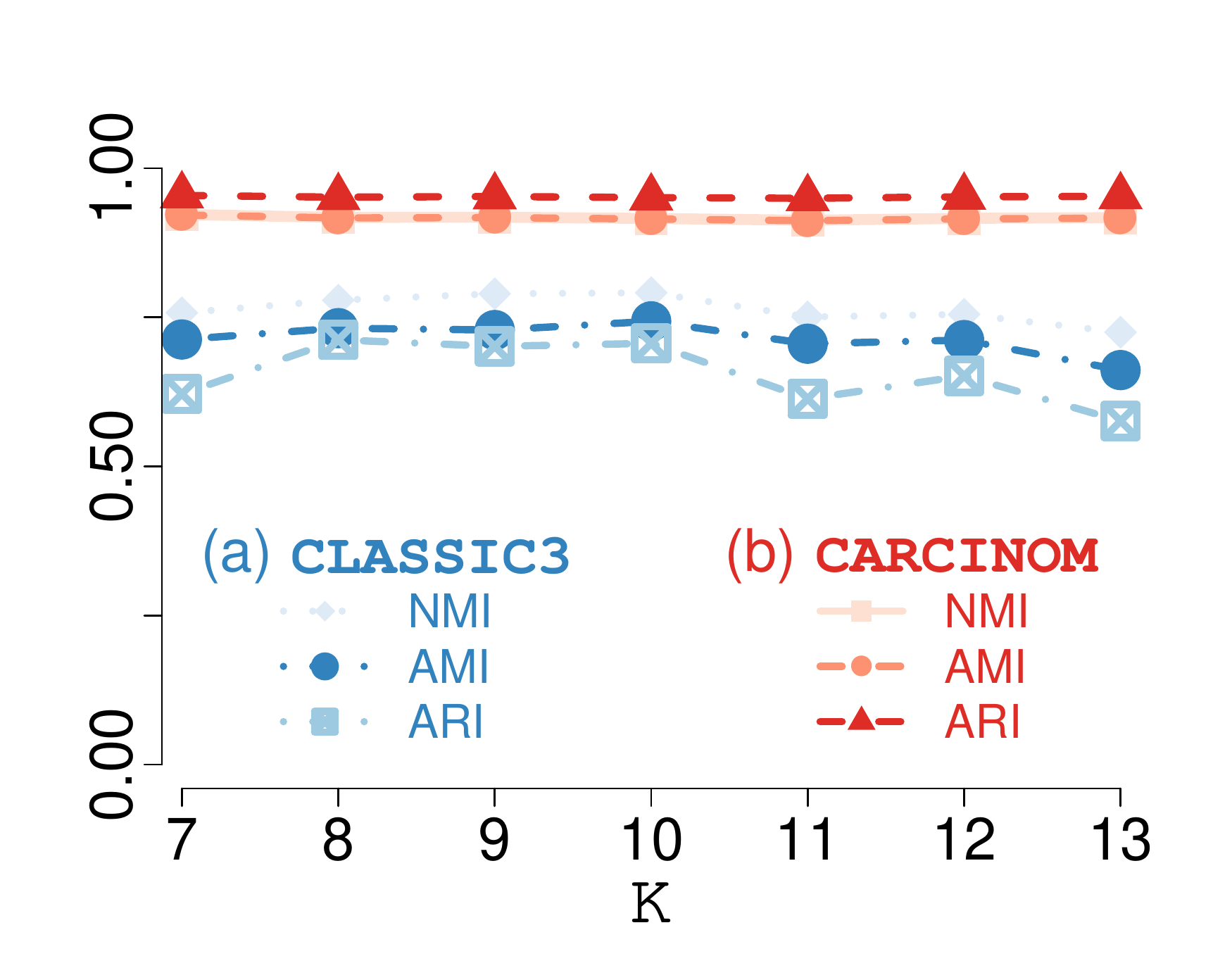}  
	\scalebox{1.3}{\begin{tabular}[b]{l||r|r|r}
			\cline{1-4}\noalign{\smallskip}
			& \textbf{NMI} & \textbf{AMI} & \textbf{ARI} \\
			\cline{1-4}\noalign{\smallskip}
			\cline{1-4}\noalign{\smallskip}
			\bera{E} (a) & 0.9157 & 0.9148 & 0.9541 \\
			\bera{M} (a) & 0.9144 & 0.9136 & 0.9525 \\
			\bera{C} (a) & 0.9157 & 0.9148 & 0.9507 \\ \hline
			VAR & 5.6$\mathrm{e}^{-7}$ & 4.8$\mathrm{e}^{-7}$ & 1.6$\mathrm{e}^{-7}$ \\ \hline\hline
			\bera{E} (b) & 0.7911 & 0.7432 & 0.7063 \\
			\bera{M} (b) & 0.7911 & 0.7432 & 0.7063 \\
			\bera{C} (b) & 0.7911 & 0.7432 & 0.7063 \\ \hline
			VAR & 0.0000 & 0.0000 & 0.0000 \\ \hline\hline
	\end{tabular}}
%\vspace*{-0.2cm}
	%      \captionlistentry[table]{A table beside a figure}
	\caption{Sensitivity Analysis to \texttt{ROCCO}'s hyperparameters used to build the initial graph, considering both (a) \bera{CLASSIC3} and (b) \bera(CARCINOM) datasets. The chart on left depicts an analysis of $K$, the number of neighbors. The Table on right depicts the results for the (E)uclidean, (M)anhattan and (C)osine distance metrics, as well as their respective VARiance.}
	\label{fig:sensitivity}
\end{figure}
\subparagraph*{\textbf{Scalability (Q5).}}
\begin{figure}[!t]
	\centering    
	\begin{subfigure}[b]{0.37\linewidth}
		\centering
		\vspace{-0.4cm}
		\includegraphics[trim=0.25cm 1.05cm 0.97cm 0.8cm, width=\linewidth,clip=TRUE]{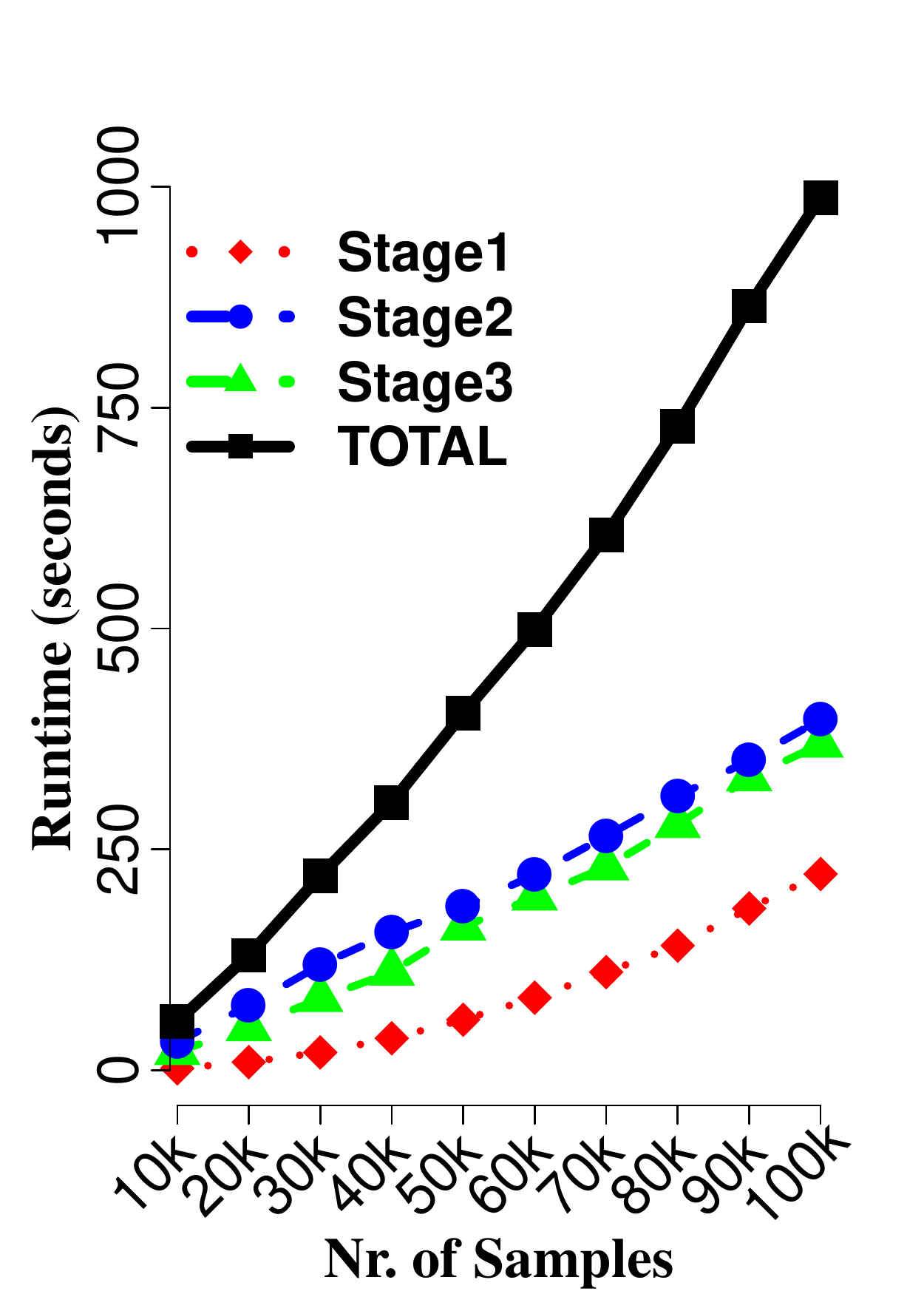}
		\label{}
		%              \caption{fig-A}
		\vspace{-0.7cm}
	\end{subfigure}
	\begin{subfigure}[b]{0.75\linewidth}
		\centering
		\includegraphics[trim=0.25cm 0.9cm 0.0cm 0.0cm, width=\linewidth,clip=TRUE]{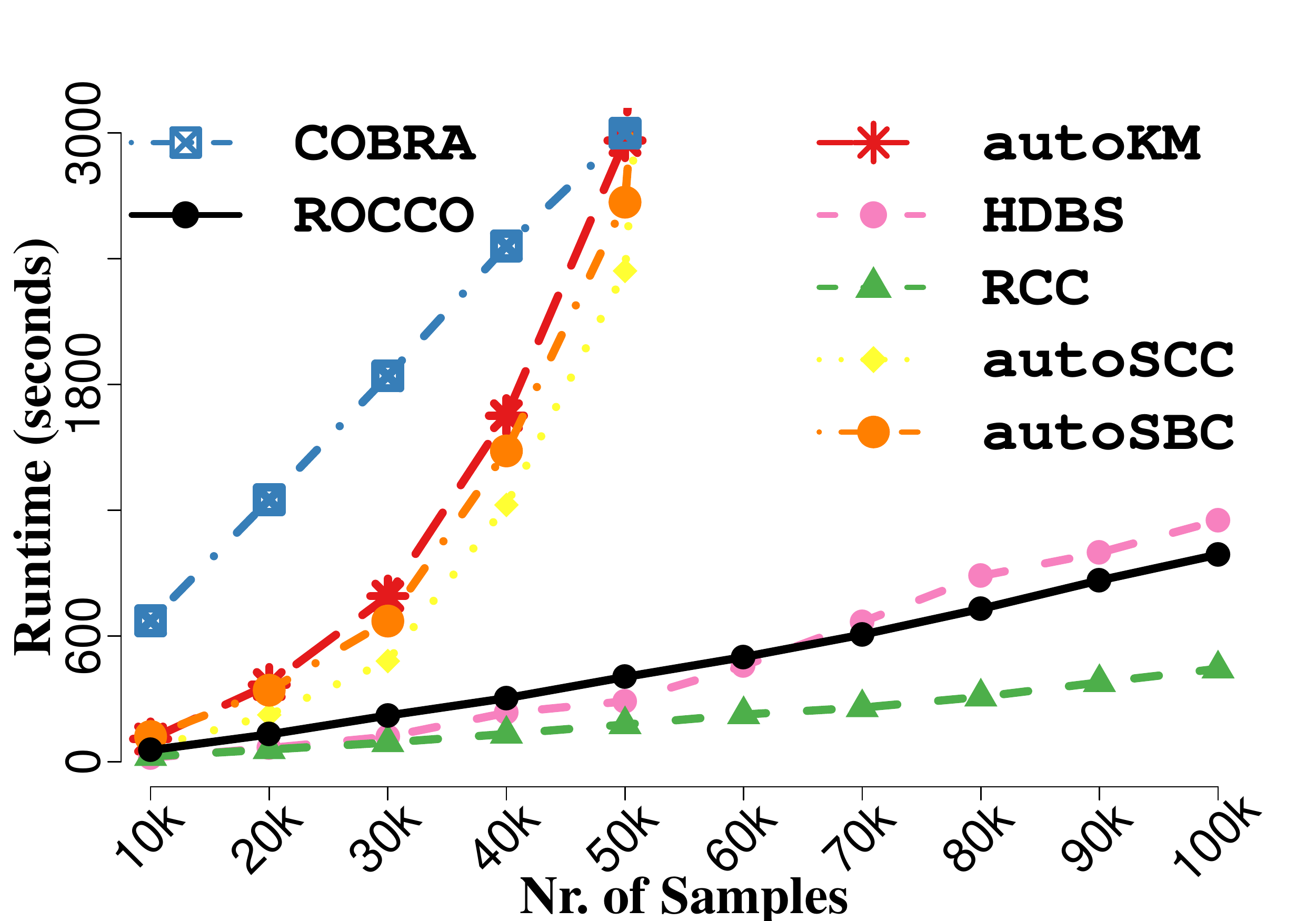}
		\label{} 
		%             \caption{fig-A}
		\vspace{-0.5cm}
	\end{subfigure}
	\caption{Empirical Scalability Evaluation using synthetic data with different sample sizes (x-axis). Top chart depicts the analysis of runtime complexity of the 3 stages of \texttt{ROCCO}: (1) graph generation, (2) representation learning and (3) clustering assignment. Bottom one benchmarks \texttt{ROCCO} \textit{vs.} its comparison partners. Best viewed in color.}
	\label{fig:scalability}
	\vspace{-2mm}
\end{figure}
To evaluate scalability empirically, we generated 10 datasets with fixed feature size (i.e. $p=100$), $\{10..100\}\times10^3$ samples and a residual noise level (i.e. 10). These datasets are used to benchmark \texttt{ROCCO} against its comparison partners. Fig. \ref{fig:scalability} illustrates the obtained results, adding also detailed analysis about the running times of the different \texttt{ROCCO} stages. Note that the \texttt{auto*} methods include the Silhouette-based model selection procedure.

\subsection{Discussion} 
In this section, we discuss the results to see whether they answer the five research questions we proposed to address with these  experiments. \newline

\subparagraph*{\textbf{Representation Learning (Q1).}} From the heatmaps in Figs. \ref{fig:intro_heatmap} and \ref{fig:heatmap}, it can be clearly seen that the representation learned by \texttt{ROCCO} (Fig. \ref{fig:heatmap}-(f)) on the synthetic dataset successfully discovers the underlining Co-clustering structure (Fig. \ref{fig:intro_heatmap}-(a)). On the other hand, the representations learned by \texttt{COBRA} do not highlight that structure so well - even when varying $RH$ exhaustively (Fig. \ref{fig:heatmap}-(a,b,c)). In the same line, \texttt{RCC} also fails to discover such structure when addressing directly either samples or features individually (Fig. \ref{fig:heatmap}-(d,e)).

Table \ref{tab:results} highlights that \texttt{ROCCO} has top-performance on $6$ and $5$ out of the $9$ datasets, the second best on $2$ and $3$ other datasets, accordingly to NMI and AMI, respectively. We have a similar observation on the results of ARI as well\footnote{\label{note_space_limitation} These results are omitted due to space limitation.}. In sum, \texttt{ROCCO} ranks in the first place on both metrics, followed by \texttt{RCC} and \texttt{COBRA}. 

These results show the essential of the objective formulation of \texttt{ROCCO} and its advancement compared to SoA methods regarding representation learning for CoC. \newline

\subparagraph*{\textbf{Real-World Problems (Q2, Q3).}} Not surprisingly, after observing Fig. \ref{fig:Syn}, we can conclude that all methods performance is inversely proportional to the noise level (NMI and AMI results are similar\footnoteref{note_space_limitation}). Yet, \texttt{ROCCO} is by far the least affected method: it provides an ARI of $0.68$ on either samples and features for the highest noise level, where the second best method (i.e. \texttt{autoKM}) only achieves an ARI of $0.41$. Moreover, \texttt{COBRAr} also outperforms \texttt{COBRA} in most cases. This fact shows the effectiveness of \texttt{ROCCO}'s clustering assignment strategy.

Table \ref{tab:results2} uncovers that \texttt{ROCCO} also achieves high performance when facing real-world datasets from multiple domains. \texttt{ROCCO} outperforms its counterparts on $4$ and $5$ out of the $9$ datasets, on NMI and AMI, respectively. Moreover, it is possible to conclude that it achieves almost always the second best on all the remaining datasets. Naturally, \texttt{ROCCO} ranks in the first place on both metrics, followed by \texttt{COBRA} and \texttt{autoKM} for NMI and AMI separately. Please note that \texttt{COBRA} provides good NMI results, but poor AMI results, since it typically produces a large number of clusters. 

These results are illustrative of the effectiveness of \texttt{ROCCO} on real-world CoC problems. However, the clustering assignment of \texttt{ROCCO} is fully data dependent - which may raise some issues. Consider the following example: if we use \texttt{RCC} to generate the final clustering of the representations learned on sparse datasets (e.g. text documents), we will obtain single points clusters\footnote{ This observation is also true when applying RCC directly on sparse datasets.}. To tackle this challenge, we proposed (in Section \ref{sec:algorithm}) to apply \texttt{autoKM} instead after representation learning. Notoriously, this approach achieves promising results (Table \ref{tab:results2}). However, this comes at the cost of increasing both \texttt{ROCCO}'s runtime (Silhouette-based model selection) and its sensitivity to outliers (inherited from \texttt{KM}). \newline
%         In addition, it shows that with the learned representation the clustering assignment strategy of ROCCO is suitable to solve real-world CoC problem in our opinion. However,   \newline 

\subparagraph*{\textbf{Robustness (Q4).}} Fig. \ref{fig:sensitivity} shows that \texttt{ROCCO} performs stable with near zero variance across different distance functions on both evaluated datasets. Regarding $K$, \texttt{ROCCO} performs stable on \bera{CARCINOM} dataset with all $K$'s and on \bera{CLASSIC3} with $K=8,9,10$. However, it degrades its performance slightly with other $K$ values. The sensitivity may also be inherited from the usage of \texttt{KM} in the clustering assignment stage (for sparse datasets). Overall, we can say that \texttt{ROCCO}'s performance is robust against its hyperparameters. \newline

\subparagraph*{\textbf{Scalability (Q5).}} Top chart in Fig. \ref{fig:scalability} depicts the runtime of 3 stages of \texttt{ROCCO}, where the runtime of stage $1$ (graph generation) scales quadratically, while stages $2$ (representation learning) and $3$ (clustering assignment) scale nearly linearly. Overall, \texttt{ROCCO}'s total runtime complexity is slightly over a linear one, since the quadratical part is still the cheapest even for $100k$ samples. 

Bottom chart in Fig. \ref{fig:scalability} depicts that \texttt{ROCCO}, \texttt{RCC}, \texttt{COBRA}, and \texttt{HDBS} scale nearly linear, while the runtime of \texttt{auto*} methods scales clearly in a quadratic scale. As mentioned before, the quadratic complexity of all \texttt{auto*} methods is due to Silhouette-based model selection. \texttt{KM}, \texttt{SCC}, and \texttt{SBC} scale linearly when the number of clusters is given beforehand\footnoteref{note_space_limitation}. Although possessing a linear runtime curve, \texttt{COBRA} turns to be quite expensive when compared to remaining partners. This slope derives from its heuristic to tune $RH$, which involves to re-run the algorithm multiple times.

By the above-mentioned reasons, all the five research questions can be answered affirmatively.

		\section{Related Work}
In this Section, we briefly review the SoA with respect to CoC, as well as to two related topics: Parameter-free and Continuous Clustering.
\subsection{Co-Clustering}
CoC has been extensively studied in the literature, which works on high-dimensional data and increases the interpretability compared to classical clustering. \texttt{SCC} \cite{dhillon2001co} is one of the earliest works on co-clustering documents and words. It treats the input data matrix as a bipartite graph and performs the normalized cut on the graph. \texttt{SBC} from Kluger \textit{et al.} \cite{kluger2003spectral} is another SoA method for CoC gene expression data. \texttt{SBC} performs SVD to approximate the original matrix followed by \texttt{KM} to find the row and column clusters. Later, many CoC methods were proposed. Dhillon \textit{et al.} proposed an information theory based co-clustering method \cite{dhillon2003information}. Ding \textit{et al.} proposed a CoC method based on Nonnegative matrix tri-factorization \cite{ding2006orthogonal}. Shan and Banerjee proposed Bayesian co-clustering \cite{shan2008bayesian}. Du and Shen proposed a CoC method with graph regularized Nonnegative matrix tri-factorization \cite{du2013towards}. Nie \textit{et al.} proposed to learn a structured optimal bipartite graph for CoC \cite{nie2017learning}. However, all these methods require to set hyperparameters beforehand, e.g. the number of clusters, subspace dimensionality or regularizers. Commonly, these hyperparameters settings strongly affect the clustering results. Typically, there are no universal guidelines on how to select them in practice. In contrast to the abovementioned methods, \texttt{ROCCO} learns critical hyperparameters automatically, e.g. regularizers. Moreover, it is robust to other hyperparameters, e.g. $K$ for constructing $K$-nearest neighbor graphs.

\subsection{Parameter-free Clustering}
Clustering is an unsupervised learning task. Therefore, hyperparameter settings are crucial in practice. Many metrics are proposed to determine the number of clusters for \texttt{KM}. The Silhouette Coefficient has been proposed long ago \cite{rousseeuw1987silhouettes}. Yet, it is still one of the most widely used metrics. Wiwie \textit{et al.} \cite{wiwie2015comparing} studies the performance of different clustering metrics and show that the Silhouette Coefficient is the most suitable for bio-medical data. \texttt{HDBS} \cite{campello2013density} is a nearly-automatic density-based approach for clustering that eliminates \texttt{DBSCAN} \cite{ester1996density} need to cut-off the resulting dendrogram. Instead, the final clusters are determined by traversing a tree of condensed view splits, allowing clusters of different densities to be  formed.

The literature on parameter-free methods CoC is scarce. To the best of our knowledge, \texttt{HiCC} \cite{ienco2009parameter} is the only method that tries to address this issue in CoC. However, \texttt{HiCC} tries to learn the hierarchy of samples and features together in a parameter-free way - which is a problem distinct from the one in our scope.

\subsection{Continuous Clustering}
Continuous clustering algorithms attracted attention recently. The main idea is to convert the clustering problem to a continuous optimization problem, where a regularized graph-based representation is learned with good clustering structure. Firstly, \cite{lindsten2011just,hocking2011clusterpath} proposed to relax the \texttt{KM} clustering and hierarchical clustering to convex optimization problems. The newly learned representation owns a good cluster structure, so that clustering assignment can be achieved from the connected component of the calculated graph. Later, Chi and Lange \cite{chi2015splitting} proposed a splitting method to accelerate the solving of the convex optimization problem. All the above-mentioned methods use the convex ($L_2$ norm) function for regularization. Recently, Shah and Koltun \cite{Shah12092017} proposed \texttt{RCC}, which uses a non-convex (Geman-McClure) function for regularization. \texttt{RCC} achieves SoA clustering accuracy without setting regularization parameters. The proposed \texttt{ROCCO} can be seen as an extension of \texttt{RCC} to the CoC problem. \texttt{ROCCO} inherited the merits of \texttt{RCC} with high-quality clustering and no regularization parameter setting. Recently, Chi \textit{et al.} proposed a convex bi-clustering algorithm \texttt{COBRA} \cite{chic2017}, which is the most relevant method to the proposed \texttt{ROCCO}. \texttt{COBRA} also learns the representation regularized on both sample and feature graphs. However, \texttt{COBRA} adopts a convex ($L_2$ norm) regularization function, while the proposed \texttt{ROCCO} uses a non-convex function for the same purpose. Therefore, the two optimization problems are radically different. \texttt{ROCCO} proposed a new optimization method to solve the resulting non-convex (harder) problem. Finally, \texttt{ROCCO} convincingly outperforms \texttt{COBRA} in our experiments. 

\section{Final Remarks}
In this paper, we formulate CoC as a continuous non-convex optimization problem that settles on a graph-based representation of both samples and features. Further, we present an approach (i.e. \texttt{ROCCO}) that solves this problem with theoretical near-linear time complexity on the input data matrix. Extensive experiments show the effectiveness of the learned representations and the resulting clustering quality. Along with these characteristics, \texttt{ROCCO}'s robustness to hyperparameters, and its scalability on both synthetic and real-world datasets either matches and/or (mostly) outperforms SoA counterparts. \texttt{ROCCO} will serve as a parameter-free CoC method for large-scale problems, regardless of the characteristics of the input data or the application domain. %Clustering assignment still raises open issues for continuous clustering problems - particularly on sparse datasets. Those issues set the direction of our future work.

\bibliography{acmart}
\bibliographystyle{alpha}

\end{document}